\documentclass[journal]{IEEEtran}
\IEEEoverridecommandlockouts                              %

\newcommand\copyrighttext{%
  \scriptsize © 2023 IEEE. The paper will appear in the IEEE Transactions on Robotics. Personal use of this material is permitted.  Permission from IEEE must be obtained for all other uses, in any current or future media, including reprinting/republishing this material for advertising or promotional purposes, creating new collective works, for resale or redistribution to servers or lists, or reuse of any copyrighted component of this work in other works.
  }
\newcommand\copyrightnotice{%
\begin{tikzpicture}[remember picture,overlay]
\node[anchor=south,yshift=10pt] at (current page.south) {\fbox{\parbox{\dimexpr\textwidth-\fboxsep-\fboxrule\relax}{\copyrighttext}}};
\end{tikzpicture}%
}

\usepackage{amsmath,amsthm,amssymb, amsfonts}
\usepackage{mathtools}
\usepackage{graphicx, xcolor}
\usepackage{pdfpages}
\usepackage{bm}
\usepackage{url}
\usepackage{enumerate}
\usepackage{float}
\usepackage[sort, compress]{cite}
\usepackage{cases,balance}
\usepackage{cancel} %
\usepackage[pdftex, pdfstartview={FitV}, pdfpagelayout={TwoColumnLeft},bookmarksopen=true,plainpages = false, colorlinks=true, linkcolor=black, citecolor = black, urlcolor = black,filecolor=black, pagebackref=false,hypertexnames=false, plainpages=false, pdfpagelabels]{hyperref}
\usepackage[font=small]{caption}
\usepackage{subcaption}

\usepackage{array}
\usepackage{booktabs}
\usepackage{multirow} %

\usepackage{comment} %
\excludecomment{comments} %

\usepackage{algorithm}
\usepackage[noend]{algpseudocode} %
\definecolor{commentclr}{RGB}{110, 149, 204}
\newcommand{\algcomment}[1]{{\color{commentclr}// #1}}
\makeatletter
\newcommand\fs@spaceruled{\def\@fs@cfont{\bfseries}\let\@fs@capt\floatc@ruled
  \def\@fs@pre{\vspace{0.6\baselineskip}\hrule height.8pt depth0pt \kern2pt}%
  \def\@fs@post{\kern2pt\hrule\relax}%
  \def\@fs@mid{\kern2pt\hrule\kern2pt}%
  \let\@fs@iftopcapt\iftrue}
\makeatother

\usepackage{tikz}
\newcommand*\circled[1]{\tikz[baseline=(char.base)]{
            \node[shape=circle,draw,inner sep= 0.3pt] (char) {#1};}}

\newcommand\norm[1]{\left\Vert#1\right\Vert}

\newcommand{\rom}[1]{\uppercase\expandafter{\romannumeral #1\relax}}

\newcommand{\R}{\mathbb{R}}
\newcommand{\N}{\mathbb{N}}

\DeclareMathOperator*{\argmax}{arg\,max}

\newcommand{\eg}{e.g.}
\newcommand{\ie}{i.e.}
\newcommand{\inv}{^{-1}}

\newcommand{\tr}{^\top}
\newcommand*{\defeq}{:=}
\newcommand{\logdet}{\log\det}

\theoremstyle{plain}%

\newtheorem{prop}{Proposition}

\theoremstyle{remark}
\newtheorem{remark}{Remark}

\theoremstyle{definition}

\newtheorem{problem}{Problem}
\newtheorem{definition}{Definition}

\hyphenation{op-tical net-works semi-conduc-tor limited guarantee coordination robustness
}

\newcommand{\cctrl}{c^{\mathrm{ctrl}}}

\newcommand{\cstate}{c^{\mathrm{state}}}

\newcommand{\cd}{\texttt{CD}}
\newcommand{\dls}{\texttt{DLS}}
\newcommand{\cls}{\texttt{CLS}}
\newcommand{\lexchange}{\texttt{FindProposal}}
\newcommand{\nop}{\small{\texttt{NOP}}}

\newcommand{\lsoffset}{O} %
\newcommand{\cmax}{c^{\mathrm{max}}}
\newcommand{\sopt}{\mathcal{S}^*}
\newcommand{\sls}{\mathcal{S}^{\mathrm{ls}}}

\newcommand{\mi}{\mathbb{I}}

\newcommand{\dsct}[3]{{#1}_{#2, #3}}

\newcommand{\ctns}[2]{ {#1}_{#2} } %

\newcommand{\dctns}[2]{ \dot{#1}_{#2} } %

\newcommand{\ndctns}[3]{ {#1}_{#2}^{(#3)} } %

\newcommand{\ctnsax}[3]{ {#1}_{#2}^{\text{#3}} } %

\newcommand{\ulqr}{u^{\text{LQR}}}

\graphicspath{{Figs/}}

\graphicspath{{Algorithms/}}

\graphicspath{{Proofs/}}

\definecolor{lime}{HTML}{A6CE39}
\DeclareRobustCommand{\orcidicon}{%
	\begin{tikzpicture}
	\draw[lime, fill=lime] (0,0) 
	circle [radius=0.16] 
	node[white] {{\fontfamily{qag}\selectfont \tiny ID}};
	\draw[white, fill=white] (-0.0625,0.095) 
	circle [radius=0.007];
	\end{tikzpicture}
	\hspace{-2mm}
}

\foreach \x in {A, ..., Z}{%
	\expandafter\xdef\csname orcid\x\endcsname{\noexpand\href{https://orcid.org/\csname orcidauthor\x\endcsname}{\noexpand\orcidicon}}
}

\title{Energy-Aware, Collision-Free Information Gathering for Heterogeneous Robot Teams}

\author{Xiaoyi Cai\orcidA{}\hspace{-0.1cm},
        Brent Schlotfeldt\orcidB{}\hspace{-0.1cm},
        Kasra Khosoussi\orcidC{}\hspace{-0.1cm},\\
        Nikolay Atanasov\orcidD{}\hspace{-0.1cm},~\IEEEmembership{Member,~IEEE},
        George J. Pappas\orcidE{}\hspace{-0.1cm},~\IEEEmembership{Fellow,~IEEE},
        and Jonathan P. How\orcidF{}\hspace{-0.1cm},~\IEEEmembership{Fellow,~IEEE}%
\thanks{This research was supported in part by Boeing Research \& Technology and ARL DCIST CRA W911NF-17-2-0181.}
\thanks{$^1$X.\ Cai and J.\ P.\ How are with the Department of Aeronautics and Astronautics, Massachusetts Institute of Technology, Cambridge, MA 02139, USA.
	    {\tt\{xyc, jhow\}@mit.edu}.}
\thanks{$^2$B.\ Schlotfeldt is at Waymo.
	    {\tt brentsc@waymo.com}.}
\thanks{$^3$K.\ Khosoussi is with the Commonwealth Scientific and Industrial Research Organisation (CSIRO)
	    {\tt kasra.khosoussi@csiro.au}.}
\thanks{$^4$N.\ Atanasov is with the Electrical and Computer Engineering Department, University of California San Diego, La Jolla, CA 92093, USA.
	    {\tt natanasov@ucsd.edu}.}
\thanks{$^5$G.\ J.\ Pappas is with the GRASP Laboratory, University of Pennsylvania, Philadelphia, PA 19104, USA.
	    {\tt pappasg@seas.upenn.edu}.}
}

\begin{document}

\maketitle
\begin{abstract}
This paper considers the problem of safely coordinating a team of sensor-equipped robots to reduce uncertainty about a dynamical process, where the objective trades off information gain and energy cost. Optimizing this trade-off is desirable, but leads to a non-monotone objective function in the set of robot trajectories. Therefore, common multi-robot planners based on coordinate descent lose their performance guarantees. Furthermore, methods that handle non-monotonicity lose their performance guarantees when subject to inter-robot collision avoidance constraints. As it is desirable to retain both the \textit{performance guarantee} and \textit{safety guarantee}, this work proposes a hierarchical approach with a distributed planner that uses local search with a worst-case performance guarantees and a decentralized controller based on control barrier functions that ensures safety and encourages timely arrival at sensing locations. Via extensive simulations, hardware-in-the-loop tests and hardware experiments, we demonstrate that the proposed approach achieves a better trade-off between sensing and energy cost than coordinate-descent-based algorithms.

\end{abstract}

\begin{IEEEkeywords}
Multi-Robot Systems; Reactive Sensor-Based Mobile Planning; Target Tracking; Collision Avoidance.
\end{IEEEkeywords}

\copyrightnotice

\floatstyle{spaceruled}%
\restylefloat{algorithm}%

\section{Introduction}\label{sec:introduction}
\begin{figure} [t]
\centering
\includegraphics[width=\columnwidth]{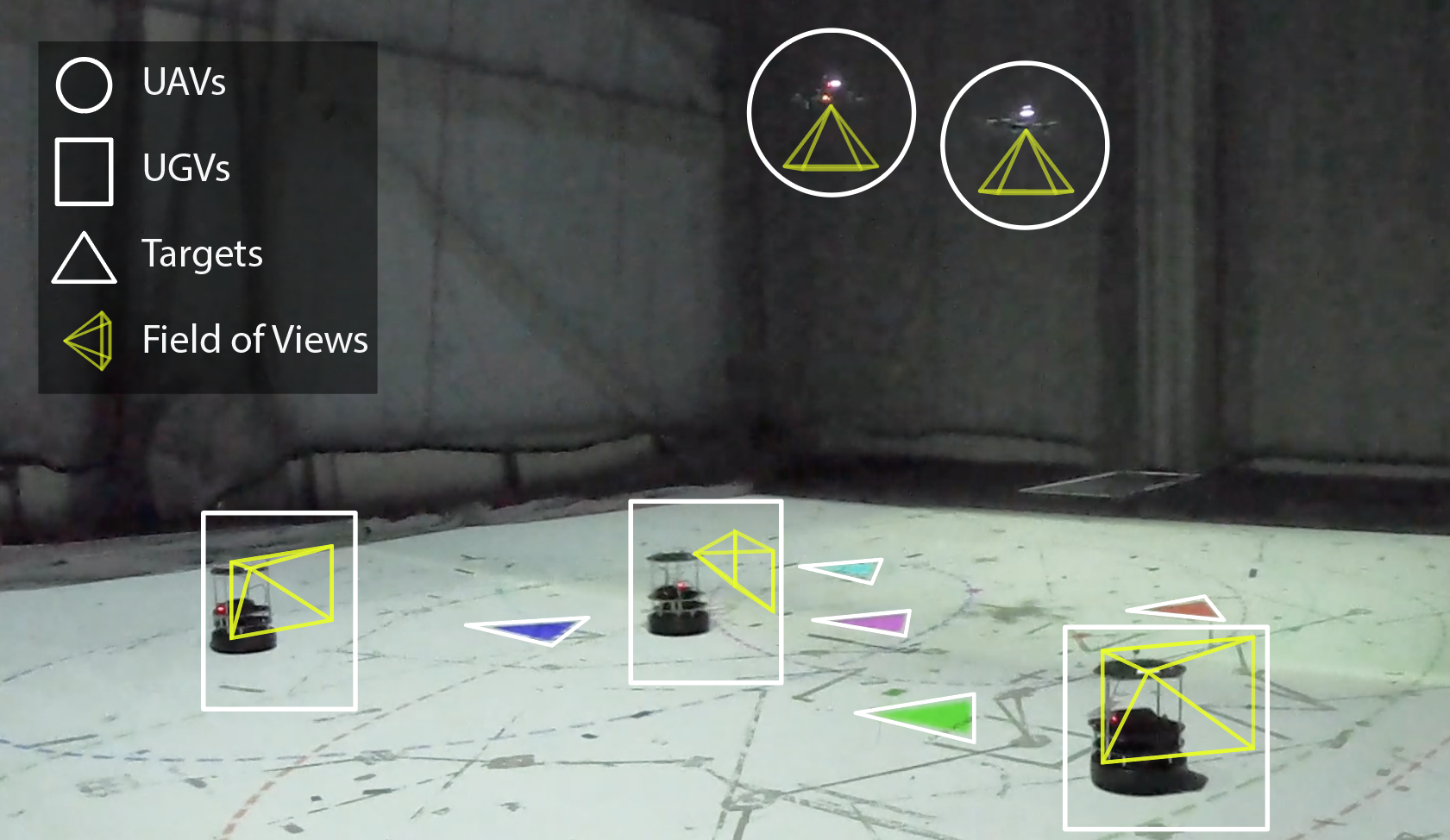}
\caption{A real-world dynamic target tracking scenario. Unmanned ground and aerial vehicles (UGVs and UAVs) use onboard cameras to collaboratively estimate the states of moving targets with different colors. The proposed approach is able to coordinate heterogeneous robots with desired trade-off between sensing performance and energy cost, while guaranteeing inter-robot collision avoidance.}
\label{fig:front_cover_hw_exp}
\end{figure}

Developments in sensing and mobility have enabled effective utilization of robot systems in diverse applications, such as autonomous mapping\cite{carlone2014active, atanasov2015decentralized, lopez2017multi, corah2019distributed}, agriculture~\cite{carbone2018swarm, edmonds2021efficient, dutta2021multi}, search and rescue\cite{kumar2004robot,beck2016online, tian2018search, queralta2020collaborative}, and environmental monitoring~\cite{shkurti2012multi, lan2016rapidly, notomista2020persistification, popovic2020informative, chen2021pareto}. 
These tasks require spatiotemporal information collection which can be achieved more efficiently and accurately by larger robot teams, rather than individual robots. Robot teams may take advantage of heterogeneous capabilities, require less storage and computation per robot, and may achieve better environment coverage in shorter time\cite{grocholsky2006cooperative,sukhatme2007design,tokekar2016sensor,rizk2019cooperative}.
Task-level performance is usually quantified by a measure of information gain, where typically the marginal improvements diminish given additional measurements (\textit{submodularity}), and adding new measurements does not worsen the objective (\textit{monotonicity}). Although planning optimally for multi-robot sensing trajectories is generally intractable, these two properties allow for \emph{near-optimal} approximation algorithms that scale to large robot teams, while providing worst-case guarantees~\cite{krause2014submodular}.

The work in this paper is motivated by problems in which robots seek to trade off between sensing performance and energy cost while maintaining safety constraints such as collision avoidance (an example shown in Fig.~\ref{fig:front_cover_hw_exp}).
Specifically, we formulate an \textit{energy-aware active information acquisition} problem, wherein the goal is to plan trajectories for a team of heterogeneous robots to maximize a weighted sum of information gain \textbf{and} the energy cost. Unlike imposing fixed budgets (e.g.,~\cite{singh2009efficient, singh2009nonmyopic , jorgensen2018team}), adding an energy cost breaks the \textit{monotonicity} of the objective, violating an assumption held by existing approximation algorithms. Thus, we present a distributed planning approach based on local search~\cite{lee2009non} that has a worst-case guarantee for the non-monotone objective function. 
For practical deployment, safety constraints such as inter-robot collision avoidance are required, but imposing such constraints directly during planning incurs large computational overhead and breaks the performance guarantee of existing approximation algorithms~\cite{corah2019distributed, kuhnle2019note}.
Therefore, we take a hierarchical approach that preserves the computational advantage and performance guarantee of the high-level planner by offloading safety constraints to the low-level controller. By leveraging control barrier functions (CBFs~\cite{ames2019control}), we propose a decentralized controller that ensures safety and encourages timely arrival at the sensing configurations via a novel use of weighted norm in the optimization objective.
Lastly, we discuss the conditions in which the high-level planner's performance guarantee will be preserved by the trajectories executed by the low-level controller. Our evaluation demonstrates that the overall approach achieves better trade-off between information gain and energy cost than the existing coordinate-descent-based methods in real-world multi-robot target tracking experiments.

\subsection{Related Work}
Multi-robot informative trajectory planning addresses the challenge of planning sensing trajectories for robots to reduce uncertainty about a dynamic process. To alleviate the computational complexity that scales exponentially in the number of robots (since the decision space is a Cartesian product of every robot's decision space), approximation methods have been developed to produce near-optimal solutions for a submodular and monotone objective (\eg, mutual information). A common technique is coordinate descent, where robots plan successively while incorporating the plans of previous robots. 
Coordinate descent was shown to extend the near-optimality of a single-robot planner to the multi-robot scenario~\cite{singh2009efficient}. 
This result was extended to dynamic targets~\cite{atanasov2015active} where each robot chooses from non-myopic sensing trajectories built from a search tree, achieving at least $50\%$ of the optimal performance regardless of the planning order. Similarly, the coordinate descent strategy was used to create an anytime method~\cite{schlotfeldt2018anytime} and a sampling-based method~\cite{kantaros2019asymptotically}.
Methods such as~\cite{jorgensen2018team, dames2017detecting} implemented the greedy method~\cite{fisher1978analysis} in a decentralized fashion, and a distributed version was proposed by~\cite{corah2019distributed} to alleviate the inefficiency in sequential planning.
Unlike prior work, this work considers the trade-off between the value and cost of information---an important formulation to consider for physical systems---but the objective becomes \textit{non-monotone} because additional sensing trajectories may not result in better performance due to high energy cost. As a result, existing techniques based on coordinate descent and greedy methods no longer have any worst-case performance guarantees. 

The problem can be seen as non-monotone submodular maximization subject to a partition matroid constraint, for which approximation algorithms already exist. Lee et al.~\cite{lee2009non} presents a method based on local search that can handle the intersection of multiple matroid constraints, where multiple solution candidates are generated and each solution set is built iteratively via additions or deletions. Extending~\cite{lee2009non}, a greedy-based approach~\cite{gupta2010constrained} was proposed to handle multiple independence systems, 
but has a worse approximation ratio given a single matroid. Other methods use multilinear relaxation~\cite{feldman2011unified, gharan2011submodular} for better approximation ratios, but require significant computation. In robotics, decentralized multi-robot task assignment was considered in~\cite{segui2015decentralised}, which adopted the continuous greedy method by~\cite{feldman2011unified}. The approach in~\cite{shin2019sample} combined sampling, greedy method, and lazy evaluation~\cite{minoux1978accelerated} to achieve fast computation.
The work presented here designs a distributed planner based on~\cite{lee2009non} for its simplicity and guarantees, while additionally incorporating well-known techniques, like greedy and lazy evaluation, to reduce the method's computation and communication.

Practical deployment of robots requires collision avoidance, for which existing approaches can be grouped into three categories. 
The first category involves \textit{spatial partitioning}, e.g., drones flying at different heights~\cite{schlotfeldt2018anytime} and ground robots driving in disjoint regions~\cite{di2021multi}, which leads to conservativeness as robots cannot easily work together to exploit different sensing modalities.
Methods in the second category adopt a \textit{hierarchical structure} that separates planning and control, where safety is only handled in low-level controller (e.g., \cite{dutta2019multi, woosley2020multi}). As a result, existing safe multi-robot controllers can be utilized (see~\cite{yan2013survey, verma2021multi} for comprehensive surveys) without affecting the problem structure in planning. 
The last category involves the \textit{joint optimization} of information gathering objectives and safety, by capturing collision avoidance as constraints or penalties, without necessarily assuming submodularity and monotonicity, where typical examples include~\cite{best2019dec, viseras2020distributed, gan2014online, zhang2017distributed}.
This work adopts a hierarchical approach that considers safety only in the low-level controller in order to preserve computational tractability and the performance guarantees of the planning algorithm. 
The discrete-time informative planning stage prescribes sequences of robot poses that need to be visited at specific time to achieve desired information-energy trade-off, and the continuous-time control stage needs to meet these terminal state and time conditions while also ensuring safety.
We propose a novel formulation of CBF-based controller with a weighted norm penalty that encourages timely arrival at planned sensing configurations by penalizing the deviation from the nominal mission rate captured by a time-varying control Lyapunov function (CLF). Although many works developed safe finite-time controllers using CBFs and CLFs (\eg,~\cite{garg2019control, garg2022fixed, garg2021robust, polyakov2022finite}), few have considered time-varying CLFs in weighted norms of the optimization objective.

\subsection{Contributions}
The energy-aware problem considered in the paper seeks to optimize the trade-off between information gain and energy cost while meeting safety requirements such as collision avoidance, for which existing approximation planning algorithms lose performance guarantees. By adopting a hierarchical approach that separates planning and control, our approach preserves the existing guarantee of a centralized planner adapted for distributed execution, while ensuring safety and encouraging timely arrival at sensing configurations via the controller. The contributions of this work are:
\begin{enumerate}
    \item a distributed planner based on local search that has an existing theoretical performance guarantee for a non-monotone submodular objective, with reduced computation and communication via lazy and greedy techniques,
    \item a decentralized controller that ensures safety via CBF and encourages timely arrival at desired sensing configurations with a novel use of weighted norm in the optimization objective that penalizes deviation from desired mission rate (Lyapunov function derivative),
    \item an extensive set of simulations, hardware-in-the-loop benchmarks and hardware experiments that demonstrate better ability to trade off sensing and energy costs than a state-of-the-art method while retaining practical feasibility in communication and computation.
\end{enumerate}

The preliminary version of this work appeared in~\cite{cai2021non} with the planning method and simulation results, while this work proposes a new decentralized controller that ensures safety and encourages arrival at sensing locations at designated time, provides extra simulation and hardware-in-the-loop benchmarks, and conducts hardware experiments that validate the proposed hierarchical approach's practical feasibility and performance.
Note that the proposed decentralized controller is not limited to information gathering tasks and may be applicable for other trajectory tracking problems.

\subsection{Outline}
The proposed planner and controller are introduced in Sec.~\ref{sec:multi_robot_planning} and~\ref{sec:control}.
Simulations and hardware-in-the-loop tests for the planner are in Sec.~\ref{sec:planning_results} and the controller is evaluated in Sec.~\ref{sec:control_results}, followed by hardware experiments with performance and feasibility analysis in Sec.~\ref{sec:hardware_experiments}.

\section{Energy-Aware Informative Trajectory Planning}\label{sec:multi_robot_planning}

This section considers how to generate trajectories for robots that optimize the trade-off between information gain and energy cost, using general nonlinear discrete dynamical and measurement models of robots and linear-Gaussian target motion models (Sec.~\ref{sec:planning_problem_formulation}). Because commonly used techniques such as coordinate descent~\cite{singh2009efficient, atanasov2015decentralized} no longer have a worst-case performance guarantee due to non-monotonicity in the problem, we propose to use local search~\cite{lee2009non} to provide near-optimal performance guarantees, which requires centralized computation undesirable in multi-robot application (Sec.~\ref{sec:cls}). Then, we propose a new distributed algorithm and reduce its communication and computation requirements based on greedy and lazy methods (Sec.~\ref{sec:dls}).

\subsection{Preliminaries}\label{sec:prelim}
We review some useful definitions. Let $g:2^\mathcal{M}\to \R$ be a set function defined on a ground set $\mathcal{M}$ consisting of finite elements. Let $g(a|\mathcal{S})\defeq g(\mathcal{S} \cup \{a\}) - g(\mathcal{S})$ be the marginal gain of $g$ at $\mathcal{S}$ with respect to $a$.

\begin{definition}[Submodularity]\label{def:submodularity}
Function $g$ is submodular if for any $\mathcal{S}_1\subseteq S_2\subseteq\mathcal{M}$ and $a\in\mathcal{M}\backslash \mathcal{S}_2$, $g(a|\mathcal{S}_1) \geq g(a|\mathcal{S}_2)$.
\end{definition}

\begin{definition}[Monotonicity]\label{def:monotonicity}
Function $g$ is monotone if for any $\mathcal{S}_1\subseteq \mathcal{S}_2 \subseteq \mathcal{M}$, $g(\mathcal{S}_1) \leq g(\mathcal{S}_2)$.
\end{definition}

\subsection{Planning Problem Formulation (Discrete Time)}\label{sec:planning_problem_formulation}

Consider robots indexed by $i \in \mathcal{R} \defeq \{1,\dots,n\}$, with states $x_{i, k}\in\mathcal{X}_i$ at step $k \in\{ 0,\ldots,K\}$ and dynamics:
\begin{equation}
\label{eq:robot_dynamics}
x_{i, k+1} = f_i(x_{i, k}, u_{i, k}),
\end{equation}
where $u_{i, k}\in\mathcal{U}_i$ is the control input and $\mathcal{U}_i$ is a finite set. We denote a control sequence as $\sigma_i = (u_{i,0}, \dots, u_{i,K-1}) \in \mathcal{U}_i^K$.

The robots' goal is to track targets with joint state $y\in\R^{d_y}$ that follows a linear-Gaussian motion model: 
\begin{equation}
    y_{k+1} = A_k y_k + w_k,\enspace\  w_k \sim \mathcal{N}(0,W_k),
    \label{eq:target_motion_model}
\end{equation}
where $A_k\in\R^{d_y \times d_y}$ and $w_k$ is a zero-mean Gaussian noise with covariance matrix $W_k \succeq 0$.
The robots have sensors that measure the target state subject to an observation model:
\begin{equation}
    z_{i,k} = H_{i,k }(x_{i,k})y_k+v_{i,k}(x_{i,k}),\enspace\  v_{i,k}\sim\mathcal{N}(0,V_{i,k}(x_{i,k})),
    \label{eq:sensor_observation_model}
\end{equation}
where $z_{i,k}\in\R^{d_{z_i}}$ is the measurement taken by robot $i$ in state $x_{i,k}$, $H_{i,k}(x_{i,k})\in\R^{d_{z_i} \times d_y}$, and $v_{i,k}(x_{i,k})$ is a state-dependent Gaussian noise, whose values are independent at any pair of time steps and across sensors.
The observation model is linear in target states but can be nonlinear in robot states. If it depends nonlinearly on target states, we can linearize it around an estimate of target states to get a linear model.

We assume every robot $i$ has access to $N_i$ control trajectories $\mathcal{M}_i = \{\sigma_i^\kappa\}_{\kappa=1}^{N_i}$ to choose from. Denote the set of all control trajectories as $\mathcal{M} = \cup_{i=1}^{n} \mathcal{M}_i$ and its size as ${N= |\mathcal{M}|}$.
Potential control trajectories can be generated by various single-robot information gathering algorithms such as~\cite{atanasov2014information, hollinger2014sampling, lan2013planning, levine2010information}.
The fact that every robot cannot execute more than one trajectory can be encoded as a partition matroid $(\mathcal{M}, \mathcal{I})$, where $\mathcal{M}$ is the ground set, and $\mathcal{I}= \{ \mathcal{S} \subseteq \mathcal{M} \mid |\mathcal{S} \cap \mathcal{M}_i| \leq 1\ \forall i\in\mathcal{R}\}$ consists of all admissible subsets of trajectories.
Given $\mathcal{S}\in\mathcal{I}$, we denote the joint state of robots that have been assigned trajectories as $x_{\mathcal{S},k}$ at time step $k$, and their indices as
$\mathcal{R}_\mathcal{S}\defeq \{i \mid |\mathcal{M}_i \cap \mathcal{S}| = 1\ \forall\ i\in\mathcal{R}\}$.
Also, denote the measurements up to step $k\leq K$ collected by robots $i\in\mathcal{R}_\mathcal{S}$ who follow the trajectories in $\mathcal{S}$ by $z_{\mathcal{S}, 1:k}$.

Due to the linear-Gaussian assumptions in \eqref{eq:target_motion_model} and  \eqref{eq:sensor_observation_model}, the optimal estimator for the target states is a Kalman filter. The target estimate covariance $\Sigma_{\mathcal{S},k}$ at time step $k$ resulting from robots $\mathcal{R}_\mathcal{S}$ following trajectories in $\mathcal{S}$ obeys:
\begin{equation}
    \Sigma_{\mathcal{S}, k+1} = \rho_{\mathcal{S},k+1}^{\mathrm e} ( \rho_{k}^{\mathrm p} (\Sigma_{\mathcal{S},k}), x_{\mathcal{S}, k+1}),
\end{equation}
where $\rho_{k}^{\mathrm p}(\cdot)$ and $\rho_{\mathcal{S},k}^{\mathrm e}(\cdot, \cdot)$ are the Kalman filter prediction and measurement updates, respectively:
\begin{equation*}
    \begin{aligned}
        \textbf{Predict:}&&\rho_{k}^{\mathrm p} (\Sigma) & \defeq A_k\Sigma A_k\tr +W_k, \\
        \textbf{Update:} && \hspace{-0.1in} \rho_{\mathcal{S},k}^{\mathrm e} (\Sigma, x_{\mathcal{S}, k}) & \defeq \left(\Sigma\inv + \sum_{i\in\mathcal{R}_\mathcal{S}} M_{i,k} (x_{i,k}) \right)\inv,\\
        && M_{i,k} (x_{i,k})& \defeq H_{i,k}(x_{i,k}) V_{i,k} (x_{i,k})\inv H_{i,k}(x_{i,k})\tr.
    \end{aligned}
\end{equation*}

When choosing sensing trajectories, we want to capture the trade-off between sensing performance and energy expenditure, which is formalized below.
\begin{problem}%
\label{prob:MA_fuel_aware}
Given initial states $x_{i,0}\in \mathcal{X}_i$ for every robot $i\in\mathcal{R}$, a prior distribution of target state $y_0$, and a finite planning horizon $K$, find a set of trajectories $\mathcal{S}\in\mathcal{M}$ to optimize:
\begin{equation} \label{eq:sensing_and_energy_objective}
\max_{\mathcal{S} \in \mathcal{I}}\  J(\mathcal{S}) \defeq  \mi(y_{1:K}; z_{\mathcal{S},1:K}) -  C(\mathcal{S}),
\end{equation}
where $\mi(y_{1:K}; z_{\mathcal{S},1:K})= \frac{1}{2}\sum_{k=1}^K \big[ \logdet\big( \rho_{k-1}^{\mathrm p} (\Sigma_{\mathcal{S},k-1})\big) - \logdet (\Sigma_{\mathcal{S},k}) \big] \geq 0 $ is the mutual information between target states and observations\footnote{Our problem differs from sensor placement problems that consider the mutual information between selected and not selected sensing locations.},
and $C:2^{\mathcal{M}}\to\R$ is defined as:
\begin{align}
C(\mathcal{S}) & \defeq \sum_{\sigma_i \in \mathcal{S}} m_i\,C_{i}(\sigma_i),\label{eq:fuel_C}
\end{align}
where $0\leq C_i(\cdot) \leq \cmax$ is a non-negative, bounded energy cost for robot $i$ to apply controls $\sigma_i$ weighted by $m_i \geq 0$.
\end{problem}

\begin{remark}
The robots are assumed to know others' states, motion models~\eqref{eq:robot_dynamics} and observation models~\eqref{eq:sensor_observation_model}, so that any robot can evaluate~\eqref{eq:sensing_and_energy_objective} given a set of trajectories.
\end{remark}

\begin{remark}
The optimization problem~\eqref{eq:sensing_and_energy_objective} is non-monotone, because adding extra trajectories may worsen the objective by incurring high energy cost $C(\mathcal{S})$. Thus, the constraint $\mathcal{S}\in\mathcal{I}$ may not be tight, \ie, some robots may not get assigned trajectories. This property is useful when a large repository of heterogeneous robots is available but only a subset is necessary for achieving good sensing performance.
\end{remark}

\begin{remark}
The choice of \eqref{eq:sensing_and_energy_objective} is motivated by the energy-aware target tracking application. However, the proposed algorithm in Sec.~\ref{sec:dls} is applicable to any scenario where  $J(\mathcal{S})$ is a submodular set function that is not necessarily monotone, but can be made non-negative with a proper offset.
\end{remark}

Solving Problem~\ref{prob:MA_fuel_aware} is challenging because adding energy cost $C(\mathcal{S})$ breaks the monotonicity of the objective, a property required for approximation methods, e.g., coordinate descent \cite{atanasov2015decentralized} and greedy algorithm \cite{fisher1978analysis}, to maintain performance guarantees. This is because these methods only add elements to the solution set, which always improves a monotone objective, but can worsen the objective in our setting, and may yield arbitrarily poor performance.

\begin{figure} [t]
\centering
\includegraphics[width=0.9\columnwidth]{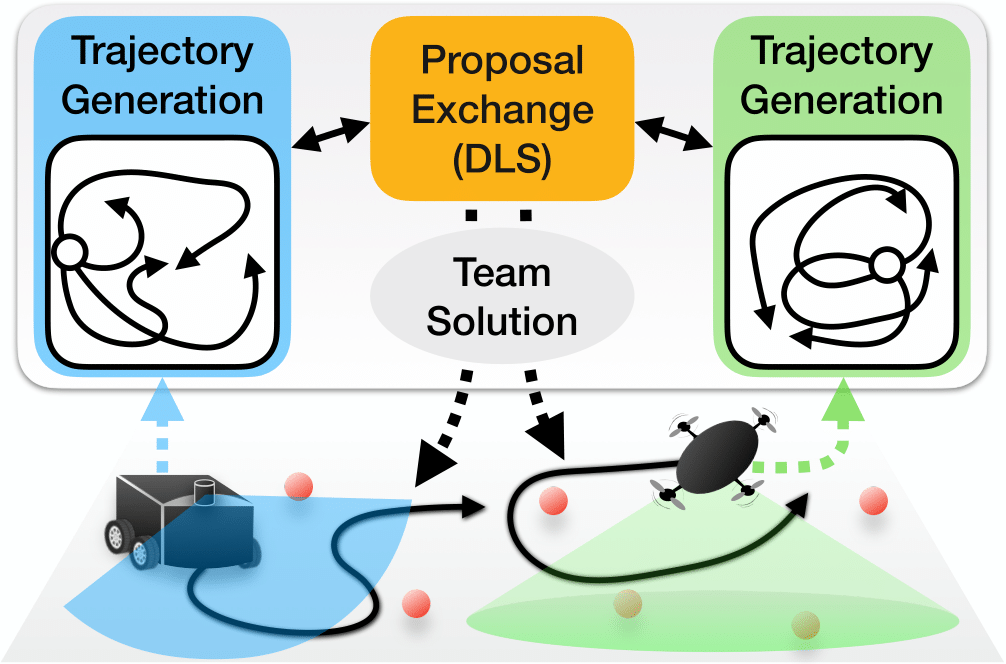}
\vspace*{-.05in}
\caption{Overview of the proposed distributed planning approach for non-monotone information gathering (see Sec.~\ref{sec:multi_robot_planning}). Robots generate individual candidate trajectories and jointly build a team plan via distributed local search (\dls{}), by repeatedly proposing changes to the collective trajectories.}
\label{fig:info_planning_vis}
\end{figure}

In the following subsections, we first present how local search~\cite{lee2009non} can be used to solve Problem~\ref{prob:MA_fuel_aware} with near-optimal performance guarantees but requires centralized computation. Subsequently, we propose a new distributed algorithm (see Fig.~\ref{fig:info_planning_vis}) that exploits the structure of a partition matroid to allow robots to collaboratively build a team plan and design techniques based on greedy and lazy methods to reduce its communication and computation requirements.

\subsection{Centralized Local Search (\cls{})}\label{sec:cls}
This section presents the original local search~\cite{lee2009non} in our setting with a single partition matroid constraint. We refer to it as centralized local search (\texttt{CLS}, Alg.~\ref{alg:cls}) because it requires access to trajectories $\mathcal{M}$ from all robots. 
We denote $g:2^{\mathcal{M}}\to \R$ as the non-negative, submodular oracle function used by local search, where the ground set $\mathcal{M}$ contains robot trajectories.
The algorithm proceeds in two rounds\footnote{Only two rounds are needed to maintain the worst-case performance guarantee given a partition matroid constraint~\cite{lee2009non}. Performance analysis for executing more planning rounds is useful but out of the scope of the paper.} to find two candidate solutions $\mathcal{S}_1, \mathcal{S}_2\in\mathcal{I}$. In each round $\kappa=1,2$, solution $\mathcal{S}_\kappa$ is initialized with a single-robot trajectory maximizing the objective (Line~\ref{cls:best_singleton}).
Repeatedly, $\mathcal{S}_\kappa$ is modified by executing one of the \textbf{Delete}, \textbf{Add} or \textbf{Swap} operations, if it improves the objective by at least $(1+\frac{\alpha}{N^4})$ of its original value (Lines~\ref{cls:start_local_ops}--\ref{cls:end_local_ops}), where $\alpha>0$ controls run-time and performance guarantees.
This procedure continues until $\mathcal{S}_\kappa$ is no longer updated, and the next round begins without considering $\mathcal{S}_\kappa$ in the ground set $\mathcal{M}$ (Line~\ref{cls:remove_sk}). 
Lastly, the better of $\mathcal{S}_1$ and $\mathcal{S}_2$ is returned.

\begin{algorithm}[t]
\caption{Centralized Local Search~\cite{lee2009non} (\cls)} 
\label{alg:cls}
\small %
\begin{algorithmic}[1]
\State \textbf{require} $\alpha>0$, ground set $\mathcal{M}$, admissible subsets $\mathcal{I}$, oracle $g$
\State $N {\footnotesize\leftarrow} | \mathcal{M} |$
\State $\mathcal{S}_1, \mathcal{S}_2 {\footnotesize\leftarrow} \emptyset$ 

\For{$\kappa=1, 2$}
    \State $\mathcal{S}_\kappa {\footnotesize\leftarrow} \{ \argmax_{a \in \mathcal{M} } g(\{ a \}) \}$ \hfill\algcomment{Initialize with best traj.} \label{cls:best_singleton} 
    \State \textbf{while} resultant $\mathcal{S}_\kappa'$ from \circled{1}, \circled{2} or \circled{3} satisfies $\mathcal{S}_\kappa'\in\mathcal{I}$ and $g(\mathcal{S}_\kappa') \geq (1+ \frac{\alpha}{N^4}) g(\mathcal{S}_\kappa)$ \textbf{do} $\mathcal{S}_\kappa {\footnotesize\leftarrow} \mathcal{S}_\kappa'$ \algcomment{Repeat local operations}\label{cls:start_local_ops}
    \State \hspace*{0.3em}  \circled{1} \textbf{Delete:} $\mathcal{S}_\kappa' {\footnotesize\leftarrow} \mathcal{S}_\kappa \backslash \{d\}$, where $d\in \mathcal{S}_\kappa$
    \State \hspace*{0.3em} \circled{2} \textbf{Add:} $\mathcal{S}_\kappa' {\footnotesize\leftarrow} \mathcal{S}_\kappa \cup \{a\}$, where $a\in \mathcal{M}\backslash \mathcal{S}_\kappa$
    \State \hspace*{0.4em} \circled{3} \textbf{Swap:} $\mathcal{S}_\kappa' {\footnotesize\leftarrow} \mathcal{S}_\kappa \backslash \{d\} \cup \{a\}$, where $d\in \mathcal{S}_\kappa,\,a\in \mathcal{M}\backslash \mathcal{S}_\kappa$\label{cls:end_local_ops}

    \State $\mathcal{M} {\footnotesize\leftarrow} \mathcal{M} \backslash \mathcal{S}_\kappa$ \algcomment{Update after while loop}\label{cls:remove_sk}
\EndFor

\State \textbf{return} $\argmax_{\mathcal{S} \in \{\mathcal{S}_1, \mathcal{S}_2\}} g( \mathcal{S})$ \label{cls:return}
\end{algorithmic}
\end{algorithm}

One important requirement of \texttt{CLS} is that the objective function $g$ is non-negative. With the objective from Problem~\ref{prob:MA_fuel_aware}, this may not be true, so we add an offset $\lsoffset$. The next proposition provides a worst-case performance guarantee for applying Alg.~\ref{alg:cls} to Problem~\ref{prob:MA_fuel_aware} after properly offsetting the objective to be  non-negative.

\begin{prop}\label{thm:LS_guarantee}
Consider that we solve Problem~\ref{prob:MA_fuel_aware} whose objective is made non-negative by adding a constant offset: 
\begin{equation}\label{eq:LS_optimization}
    \max_{\mathcal{S}\in\mathcal{I}} \ g(\mathcal{S}) \defeq J(\mathcal{S}) +\lsoffset,
\end{equation}
where $\lsoffset  \defeq 
\sum_{i=1}^{n}m_i \cmax$. 
Denote $\sopt$ and $\sls$ as the optimal solution and solution obtained by \cls{} (Alg.~\ref{alg:cls}) for~\eqref{eq:LS_optimization}, by using  $g(\cdot)$ as the oracle. We have the following worst-case performance guarantee for the objective:
\begin{equation}
    0 \leq g(\sopt) \leq 4(1+\alpha) g(\sls).
\end{equation}
\end{prop}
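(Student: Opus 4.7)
The plan is to verify the three hypotheses required to invoke the local search guarantee of Lee et al.~\cite{lee2009non} for non-monotone submodular maximization subject to a single matroid constraint, which directly yields the $4(1+\alpha)$ ratio. Those hypotheses are (i) $g$ is non-negative on $\mathcal{I}$, (ii) $g$ is submodular, and (iii) the feasibility constraint is a matroid.

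First I would check non-negativity of $g$. Since $\mathcal{S}\in\mathcal{I}$ contains at most one trajectory per robot, the additive form of $C$ in~\eqref{eq:fuel_C} together with the bound $C_i(\sigma_i)\le \cmax$ gives $C(\mathcal{S})\le \sum_{i=1}^n m_i \cmax = \lsoffset$. Combined with $\mi(y_{1:K};z_{\mathcal{S},1:K})\ge 0$, this yields $g(\mathcal{S}) = \mi(y_{1:K};z_{\mathcal{S},1:K}) - C(\mathcal{S}) + \lsoffset \ge 0$, which is the lower bound claimed and also certifies that \cls{} operates on a non-negative oracle. Next, I would verify submodularity of $g$. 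Submodularity of the mutual information term with respect to the set of selected control trajectories is standard for linear-Gaussian filtering: conditioning on more observations can only reduce the marginal reduction in posterior log-determinant, and this has been established in prior sensor-selection work (e.g., the information-theoretic arguments used in~\cite{atanasov2015decentralized}). The cost term $C(\mathcal{S})$ is modular by~\eqref{eq:fuel_C}, hence $-C(\mathcal{S})$ is modular, and the constant offset $\lsoffset$ does not affect submodularity; the sum of a submodular and a modular function remains submodular. Third, the collection $\mathcal{I}=\{\mathcal{S}\subseteq \mathcal{M}\mid |\mathcal{S}\cap \mathcal{M}_i|\le 1 \text{ for all } i\}$ is the partition matroid over the partition $\{\mathcal{M}_1,\ldots,\mathcal{M}_n\}$ of $\mathcal{M}$, so condition (iii) is immediate.

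With the three hypotheses verified, I would invoke Theorem~3.4 (local-search under one matroid) of~\cite{lee2009non}, which states that for any non-negative submodular $g$ on a matroid, the local-search procedure with tolerance parameter $\alpha/N^4$ and two rounds returns $\sls$ satisfying $g(\sopt)\le 4(1+\alpha)\, g(\sls)$. Since our Alg.~\ref{alg:cls} is a faithful instantiation of that procedure with oracle $g$ defined in~\eqref{eq:LS_optimization}, the guarantee transfers verbatim. The non-negativity bound $0\le g(\sopt)$ is already established in the first step.

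The main obstacle I anticipate is purely one of careful citation: one must confirm that the two-round \textbf{Delete}/\textbf{Add}/\textbf{Swap} template in Alg.~\ref{alg:cls} matches exactly the hypotheses of the Lee et al.\ theorem for a single matroid (in particular that two rounds, rather than the more elaborate multi-matroid version, suffice to achieve the $4(1+\alpha)$ factor and that the approximate-improvement threshold $(1+\alpha/N^4)$ is the one used in their analysis). Beyond that, the non-negativity and submodularity verifications are routine given the structure of the mutual-information objective and the bound on $C(\mathcal{S})$ enabled by the partition-matroid constraint.
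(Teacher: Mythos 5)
Your proposal is correct and follows essentially the same route as the paper: verify that $g$ is non-negative, submodular, and constrained by a partition matroid, then invoke the Lee et al.\ local-search guarantee (the paper cites it as Thm.~2 with $k=1$, $\epsilon=\alpha$) to obtain the $4(1+\alpha)$ factor. Your explicit derivation of $C(\mathcal{S})\le\lsoffset$ from the partition-matroid structure actually spells out the non-negativity step more carefully than the paper's proof, which only notes that the mutual information and $C(\mathcal{S})$ are each non-negative.
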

\begin{proof}
In~\eqref{eq:sensing_and_energy_objective}, mutual information is a submodular set function defined on measurements provided by selected trajectories~\cite{atanasov2015decentralized}.
Moreover, $C(\mathcal{S})$ is modular given its additive nature:
\begin{equation}
    C(\mathcal{S}) = \sum_{\sigma_i \in \mathcal{S}} m_i C_{i}( \sigma_i)\geq 0.
\end{equation}
Since mutual information is non-negative, \eqref{eq:LS_optimization} is a submodular non-monotone maximization problem with a partition matroid constraint. 
Setting $k=1$ and $\epsilon=\alpha$ in~\cite[Thm. 2]{lee2009non}, the proposition follows directly after rearranging terms.
\end{proof}

\begin{remark}
Having the constant $\lsoffset$ term in~\eqref{eq:LS_optimization} does not change the optimization in Problem~\ref{prob:MA_fuel_aware}, but ensures that the oracle used by \cls{} (Alg.~\ref{alg:cls}) is non-negative so that the ratio $(1+\frac{\alpha}{N^4})$ correctly reflects the sufficient improvement condition. 
\end{remark}

Despite the guarantee, \cls{} is not suitable for distributed robot teams, because it assumes access to all locally planned robot control trajectories which can be communication-expensive to gather. Additionally, running it naively can incur significant computation: in the worst case, \cls{} requires $\mathcal{O}(\frac{1}{\alpha}N^6\log(N))$ oracle calls\footnote{For 2 solution candidates, each requires $\mathcal{O}(\frac{1}{\alpha}N^4\log(N))$ local operations, and $N^2$ oracle calls to find each local operation in the worst case.}, where $N$ is the total number of trajectories~\cite{lee2009non}. 
To address this problem, we propose a new distributed algorithm that exploits the structure of a partition matroid to allow robots to collaboratively build a team plan by repeatedly proposing changes to the collective trajectories. Moreover, we develop techniques to reduce its computation and communication to improve scalability.

\subsection{Distributed Local Search (\texttt{DLS})} \label{sec:dls}
This section proposes a distributed implementation of local search
(see Algs.~\ref{alg:dls} and~\ref{alg:local_exchange} written for robot $i$).
Exploiting the structure of the partition matroid, \dls{} enables each robot to propose local operations based on its own trajectory set, while guaranteeing that the team solution never contains more than one trajectory for every robot.
All steps executed by \cls{} can be proposed in a distributed fashion, so \dls{} provides the same performance guarantee in Proposition~\ref{thm:LS_guarantee}.
By prioritizing search orders and starting with greedy solutions, we reduce computation and communication of \dls{}, respectively.

\subsubsection{Distributed Proposal}
Every proposal consists of two trajectories $(d,a)$, where $d$ is to be deleted from and $a$ is to be added to the solution set.
We also define a special symbol ``$\nop$'' that leads to no set operation, \ie, $\mathcal{S}_\kappa \cup \{\nop\} = \mathcal{S}_\kappa \backslash \{\nop\} = \mathcal{S}_\kappa$. 
Note that $(d,\nop)$, $(\nop, a)$ and $(d, a)$ are equivalent to the \textbf{Delete}, \textbf{Add} and \textbf{Swap} steps in \cls{}.

Every robot $i$ starts by sharing the size of its trajectory set $|\mathcal{M}_i|$ and its best trajectory $a^*_i \in \mathcal{M}_i$ in order to initialize $\mathcal{S}_\kappa$ and $N$ collaboratively (Alg.~\ref{alg:dls} Lines~\ref{dls:start_N_Sk}--\ref{dls:end_N_Sk}).
Repeatedly, every robot $i$ executes the subroutine \lexchange{} (Alg.~\ref{alg:local_exchange}) in parallel, in order to propose changes to $\mathcal{S}_\kappa$ (Alg.~\ref{alg:dls} Lines~\ref{dls:start_find_proposal}--\ref{dls:end_find_proposal}). Since any valid proposal shared by robots improves the objective, the first $(d, a)\neq (\nop, \nop)$ will be used by all robots to update $\mathcal{S}_\kappa$ in every round (Alg.~\ref{alg:dls} Lines~\ref{dls:start_receive_proposal}--\ref{dls:end_receive_proposal}).
We assume instantaneous communication, so robots always use a common proposal to update their copies of $\mathcal{S}_\kappa$. Otherwise, if delay leads to multiple valid proposals, a resolution scheme is required to ensure robots pick the same proposal.

In \lexchange{} (Alg.~\ref{alg:local_exchange}), an outer loop looks for potential deletion $d\in \mathcal{S}_\kappa$ (Alg.~\ref{alg:local_exchange} Lines~\ref{lexchange:start_delete}--\ref{lexchange:end_delete}). Otherwise, further adding $a\in\mathcal{M}_i$ is considered, as long as the partition matroid constraint is not violated (Alg.~\ref{alg:local_exchange} Lines~\ref{lexchange:start_check_partition_matroid}--\ref{lexchange:end_check_partition_matroid}). Next, we discuss how to efficiently search for trajectories to add.

\begin{algorithm}[t]
\caption{Distributed Local Search (\dls)}
\label{alg:dls}
\small %
\begin{algorithmic}[1]
\State \textbf{require} $\alpha>0$, trajectories $\mathcal{M}_i$, oracle $g$
\State Sort $\mathcal{M}_i$ in descending order based on $g(a|\emptyset)$ for all $a\in\mathcal{M}_i$\label{dls:pq_sort}
\State $\mathcal{S}_1, \mathcal{S}_2 \leftarrow \emptyset$
\For{$\kappa=1, 2$}
    \State Broadcast $| \mathcal{M}_i |$ and $a_i^*\in\mathcal{M}_i$ that maximizes $ g(\{a_i^* \})$\label{dls:start_N_Sk}
    \State $\mathcal{S}_\kappa \leftarrow \{a^*\}$, where $a^*\in \{ a_i^* \}_{i=1}^n$ maximizes $ g(\{a^* \})$
    \State $N \leftarrow \sum_{i=1}^n | \mathcal{M}_i |$\label{dls:end_N_Sk}

    \Repeat\label{dls:start_find_proposal}
        \State Run \lexchange($\mathcal{S}_\kappa, \mathcal{M}_i,\alpha,N, g$) in background\label{dls:run_find_proposal}
        \If{Receive $(d, a) \neq (\nop, \nop)$}\label{dls:start_receive_proposal}
            \State Terminate \lexchange{} if it has not finished
            \State $\mathcal{S}_\kappa \leftarrow \mathcal{S}_\kappa \backslash \{ d \} \cup \{a\}$\label{dls:end_receive_proposal}
        \EndIf
    \Until{Receive $(d, a) = (\nop, \nop)$ from all robots}\label{dls:end_find_proposal}
    
    \State $\mathcal{M}_i \leftarrow \mathcal{M}_i \backslash \mathcal{S}_\kappa$
\EndFor %
\State \textbf{return} $ \argmax_{ \mathcal{S} \in \{\mathcal{S}_1, \mathcal{S}_2\}} g( \mathcal{S})$
\end{algorithmic}
\end{algorithm}
\begin{algorithm}[t]
\caption{Find Proposal (\lexchange)}
\label{alg:local_exchange}
\small
\begin{algorithmic}[1]
\State 
        \textbf{require}
         $\mathcal{S}_\kappa$, %
         $\mathcal{M}_i$, %
         $\alpha>0$, $N$,
         $g$

\For{$d\in \mathcal{S}_\kappa$ or $d=\nop$}\ \algcomment{Delete $d$, or no deletion}\label{lexchange:start_delete}
    \State $\mathcal{S}_\kappa^- \leftarrow \mathcal{S}_\kappa \backslash \{d\}$ \label{lexchange:SK_minus}
    \State $\Delta \leftarrow (1+\frac{\alpha}{N^4}) g(\mathcal{S}_\kappa) - g(\mathcal{S}_\kappa^-)$\ \algcomment{$\Delta$: deficiency of $\mathcal{S}_\kappa^-$}\label{lexchange:deficiency_of_SK_minus}
    \If{$\Delta \leq 0$}
        \State \textbf{broadcast} $(d,\nop)$ \label{lexchange:end_delete}
    \EndIf
    \If{$\exists\ a\in \mathcal{S}_\kappa^-$ planned by robot $i$}\label{lexchange:start_check_partition_matroid}
        \State \textbf{continue} \algcomment{Cannot add due to partition matroid}\label{lexchange:end_check_partition_matroid}
    \EndIf
    \For{$a\in\mathcal{M}_i$ in sorted order} \algcomment{Add $a$}\label{lexchange:iter_sorted_order}
        \If{$g(a|\emptyset) < \Delta$}\label{lexchange:no_promising_trajs}
            \State \textbf{break} \algcomment{No $a\in\mathcal{M}_i$ will improve $\mathcal{S}_\kappa^-$ enough}\label{lexchange:break_if_no_promising_trajs}
        \EndIf
        \If{$g(a|\mathcal{S}_\kappa^-) \geq \Delta$}\label{lexchange:found_proposal} 
            \State \textbf{broadcast} $(d,a)$\label{lexchange:broadcast_proposal}
        \EndIf
    \EndFor
\EndFor
\State \textbf{broadcast} $(\nop, \nop)$

\end{algorithmic}
\end{algorithm}

\subsubsection{Lazy Search}
Instead of searching over trajectories in an arbitrary order, we can prioritize the ones that already perform well by themselves, based on $g( a | \emptyset)$ for all $a\in\mathcal{M}_i$ (Alg.~\ref{alg:dls} Line~\ref{dls:pq_sort}). 
Note that $\emptyset$ is the empty set, and the marginal gain $g( a | \emptyset)$ is equivalent to $g(\{a\})-g(\emptyset)$.
In this fashion, we are more likely to find trajectories that provide sufficient improvement earlier (Alg.~\ref{alg:local_exchange} Lines~\ref{lexchange:found_proposal}--\ref{lexchange:broadcast_proposal}). 
Note that $g( a | \emptyset)$ is typically a byproduct of the trajectory generation process, so it can be saved and reused.

This ordering also allows us to prune unpromising trajectories.
Given the team solution after deletion $\mathcal{S}_\kappa^- \defeq \mathcal{S}\backslash \{d\}$, the required marginal gain for later adding trajectory $a$ is
\begin{equation}\label{eq:suff_improv_a}
    g(a|\mathcal{S}_\kappa^-) \geq \Delta\defeq (1+\frac{\alpha}{N^4}) g(\mathcal{S}_\kappa) - g(\mathcal{S}_\kappa^-).
\end{equation}
We can prune any $a\in\mathcal{M}_i$ if $g( a | \emptyset)<\Delta$ based on the diminishing return property: because $\emptyset \subseteq \mathcal{S}_\kappa^- $, we know that $\Delta > g( a | \emptyset) \geq g( a | \mathcal{S}_\kappa^-)$, violating condition~\eqref{eq:suff_improv_a}.
Similarly, all subsequent trajectories $a'$ can be ignored, because their marginal gains $g(a'|\emptyset)\leq g(a|\emptyset)<\Delta$ due to ordering (Alg.~\ref{alg:local_exchange} Lines~\ref{lexchange:no_promising_trajs}--\ref{lexchange:break_if_no_promising_trajs}). 
Lastly, if an addition improves $\mathcal{S}_\kappa^-$ sufficiently, the proposal is broadcasted (Alg.~\ref{alg:local_exchange} Lines~\ref{lexchange:found_proposal}--\ref{lexchange:broadcast_proposal}). 

\subsubsection{Greedy Warm Start}
We observe empirically that a robot tends to swap its own trajectories consecutively for small growth in the objective, increasing communication unnecessarily. 
This can be mitigated by a simple technique: when finding local operations initially, we force robots to only propose additions to greedily maximize the objective, until doing so does not lead to enough improvement or violates the matroid constraint. Then robots resume Alg.~\ref{alg:local_exchange} and allow all local operations. By warm starting the team solution greedily, every robot aggregates numerous proposals with smaller increase in the objective into a greedy addition with larger increase, thus effectively reducing communication.

\section{Safe and Timely Control}\label{sec:control}
The output from the discrete-time planning stage (Sec.~\ref{sec:multi_robot_planning}) consists of pose sequences that have to be reached by every robot at designated time. This section considers how to find continuous-time control policies for reaching the sensing configurations in a timely fashion, subject to safety constraints such as but not limited to inter-agent collision avoidance.  We model continuous-time dynamics using integrator models that are applicable to a wide range of robots with differentially flat or feedback-linearized dynamics such as quadrotors and differential-drive robots (Sec.~\ref{sec:control_problem_formulation}) where the system states and inputs can be expressed via a small set of flat variables and their derivatives (in our case, the position and its higher derivatives)~\cite{fliess1995flatness}.
We leverage solutions to the linear quadratic regulator (LQR) with fixed boundary state and time to obtain the nominal control with finite-time convergence (Sec.~\ref{sec:LQR}), which can be modified to ensure safety via CBF in the form of a Quadratic Program (QP) that can be easily adapted to run in a decentralized fashion (Sec.~\ref{sec:cbf} and Sec.~\ref{sec:decentralized_cbf}). Our main novelty is a new CBF-QP formulation with weighted norm in the objective that encourages timely arrival under safety constraints by penalizing derivation from the desired mission rate represented by the time derivative of the Lyapunov function under the LQR policy (Sec.~\ref{sec:weighted_cbf_qp}).
Lastly, we discuss the conditions under which the high-level planner's worst-case performance guarantee will be preserved by the trajectories executed by the low-level controller (Sec.~\ref{sec:controller_impact}).
Note that the proposed controller is applicable for many problems that involve trajectory tracking and is not limited to the information gathering scenario considered in this paper.

\subsection{Control Problem Formulation (Continuous Time)}\label{sec:control_problem_formulation}

Consider robots with integrator models of order $r\geq 1$, where $x_i=[p_i\tr, \dctns{p}{i}\tr, \dots, (\ndctns{p}{i}{r-1})\tr  ]\tr \in\R^{3r}$ is the state of robot $i$ 
and $p_i=[\ctnsax{p}{i}{x}, \ctnsax{p}{i}{y}, \ctnsax{p}{i}{z}]\tr \in\R^3$ contains the x, y, z positions. The state for each robot $i$ evolves according to the following dynamical model:
\begin{equation}\label{eq:int_system}
\dot x_i = 
\overbrace{
\overbrace{
\begin{bmatrix}
0 & 1 & \cdots & 0 \\
0 & 0 & \ddots & \vdots \\
\vdots & \ddots & \ddots & 1 \\
0 & \cdots & \cdots & 0
\end{bmatrix}
}^{F\in \R^{r \times r}}
\otimes I_{3\times3} 
}^{A\in \R^{3r \times 3r}}
\cdot x_i
+ 
\overbrace{
\overbrace{
\begin{bmatrix}
0 \\ 0 \\ \vdots \\ 1
\end{bmatrix}
}^{G\in\R^{r}}
\otimes I_{3\times3} 
}^{B\in\R^{3r\times3}}\cdot u_i.
\end{equation}

Recall that the solution to the planning Problem~\ref{prob:MA_fuel_aware} consists of a sequence of actions $\sigma_i=(\dsct{u}{i}{0}, \dots, \dsct{u}{i}{K-1} )\in \mathcal{U}^K_i$ for each robot $i$. Based on the discrete motion model~\eqref{eq:robot_dynamics}, $\sigma_i$ also corresponds to a state sequence $( \dsct{x}{i}{0}, \dots, \dsct{x}{i}{K}) \in \mathcal{X}^{K+1}_i$ with  $x_{i,0}$ being the initial state of robot $i$. 
Denote the corresponding continuous time stamps as $t_k=k\tau$ for $k\in\{0,\dots,K\}$, where $\tau>0$ is the discrete time interval.
As we consider a team of robots gathering information in 3D space, we assume that there is an appropriate mapping $\phi_i:\mathcal{X}_i\rightarrow \R^{3r}$ to convert discrete states used by planner to reference goals for controller:
\begin{equation}
\ctnsax{x}{i}{ref}(t_k) \defeq \phi_i(\dsct{x}{i}{k}),\quad \forall i\in\mathcal{R},\, k \in \{ 0,\dots,K \}
\end{equation}
are the reference states that every robot $i$ has to reach at time $t_k$. For linear systems like~\eqref{eq:int_system}, the problem of reaching desired states at desired time is well studied, thus we adopt techniques from the optimal control literature.

\begin{problem}%
\label{prob:safe_and_timely_control}
Given the solution to the Problem~\ref{prob:MA_fuel_aware} that consists of discrete-time state sequences $(\dsct{x}{i}{0}, \dots, \dsct{x}{i}{K} )\in \mathcal{X}^{K+1}_i$  for robot $i\in\mathcal{R}$ at steps $k\in\{0,\dots,K\}$, find a control policy $u(x, t)$ such that the system~\eqref{eq:int_system} under the policy satisfies
\begin{enumerate}
\item \textbf{\textit{Timely Arrival}}: robots reach their reference sensing configurations $\ctnsax{x}{i}{ref}(t_k) = \phi_i(\dsct{x}{i}{k})$ at time $t_k=k\tau$, for all $i\in\mathcal{R}$ and steps $k\in\{0,\dots, K\}$.
\item \textbf{\textit{Safety Constraints}}:
robots remain within the safe set $\mathcal{C}_0$ defined as the superlevel set of some continuously differentiable function $h:\R^{3nr}\rightarrow \R$ where
\begin{equation}\label{eq:safe_set}
\mathcal{C}_0 = \{ x \in \R^{3nr}\ \vert\ h(x) \geq 0 \}.
\end{equation} 
\end{enumerate}
\end{problem}

\subsection{Fixed-Final-State Linear Quadratic Regulator}\label{sec:LQR}
This section reviews the results in~\cite{lewis2012optimal} on LQR with fixed boundary conditions. In order to reach the sensing configurations $\ctnsax{x}{i}{ref}(t_k)$ for each robot $i$ as designated by the informative trajectory planning algorithm, a sequence of LQR problems with fixed boundary conditions can be efficiently solved to produce the policy $u_i(x_i, t)$ given current state $x_i$ during time $t_{k}\leq t\leq t_{k+1}$ for $k\in\{0,\dots,K-1\}$. For energy efficiency, we want to find such policies that minimize control effort:%
\begin{equation}
\label{eq:lqr}
\begin{aligned}
\min_{u_i} \quad &  \frac{1}{2} \int_{t_{k}}^{t_{k+1}} u_i(x_i,t)\tr R u_i(x_i,t)\, dt, \\
\textrm{s.t.} \quad & \ctns{x}{i}(t_{k}) =\ctnsax{x}{i}{ref}(t_k),\\
                    & \ctns{x}{i}(t_{k+1}) =\ctnsax{x}{i}{ref}(t_{k+1}),\\
                    & \dot{x}_i = Ax_i + Bu_i,
\end{aligned}
\end{equation}
where $R\in\R^{3\times 3}$ is some positive definite weight matrix.
The optimal control for~\eqref{eq:lqr} is open-loop and has a closed-form expression for every interval $t_{k} \leq t \leq t_{k+1}$. However, the need for collision avoidance may prevent robots from closely following the state trajectory induced by the open-loop controller. As a simple remedy, the open-loop controller for robot $i$ can be recomputed at time $t$ based on the associated robot state $x_i$, leading to a time-varying policy of the form:
\begin{equation}
\label{eq:closedloop_lqr_control}
\ulqr_i(x_i, t) = R\inv B\tr e^{A\tr(t_{k+1}-t)}G_k(t)\inv d_{i, k}(x_i, t),
\end{equation}
where $d_{i, k}$ is the final state difference
\begin{equation}
\label{eq:final_state_difference}
    d_{i, k}(x_i, t) = \ctnsax{x}{i}{ref}(t_{k+1})-e^{A(t_{k+1}-t)} x_i,
\end{equation}
and $G_k$ is the weighted reachability Gramian defined as
\begin{equation}
\label{eq:reachability_gramian}
G_k(t) = \int_{t}^{t_{k+1}} e^{A(t_{k+1}-s)} B R\inv B\tr e^{A\tr(t_{k+1}-s)} ds .
\end{equation}
Note that the integral~\eqref{eq:reachability_gramian} is not necessary and $G_k(t)$ can be obtained in closed-form. 
First, one has to solve the following matrix Riccati equation 
\begin{equation}\label{eq:lyapunov_equation}
    \dot P(t) = AP(t) + P(t)A\tr + BR\inv B\tr, 
\end{equation}
for $t_k\leq t \leq t_{k+1}$, whose solution takes the form of
\begin{equation}\label{eq:lyapunov_equation_soln}
P(t) = e^{A(t-t_{k})} P(t_{k}) e^{A\tr(t-t_{k})} + G_k(t).
\end{equation}
As the goal is to compute $G_k(t)$, we notice that~\eqref{eq:lyapunov_equation_soln} becomes $G_k(t)=P(t)$ if we set the initial condition as $P(t_{k})=0$. Therefore, once $P(t)$ has been derived offline analytically, $G_k(t)$ can also be evaluated efficiently online.

For the energy-aware planning problem~\ref{prob:MA_fuel_aware} in Sec.~\ref{sec:multi_robot_planning}, it is useful for the planner to know the energy cost associated with the policy~\eqref{eq:closedloop_lqr_control}, which can be computed in closed-form after substituting~\eqref{eq:closedloop_lqr_control} and~\eqref{eq:reachability_gramian} into the objective of~\eqref{eq:lqr}:
\begin{equation}
\label{eq:optimal_lqr_objective}
J^*_{i,k} = \frac{1}{2} d_{i,k} (\ctnsax{x}{i}{ref}(t_{k}), t_{k})\tr G_k(t_{k})\inv d_{i}(\ctnsax{x}{i}{ref}(t_{k}), t_{k}),
\end{equation}
for robot $i$ between $t_{k}$ and $t_{k+1}$. The energy cost~\eqref{eq:optimal_lqr_objective} is exact when the policy~\eqref{eq:closedloop_lqr_control} is executed faithfully when the nominal policy does not violate safety constraints.

This section has introduced the policy~\eqref{eq:closedloop_lqr_control} for robots to follow the trajectories produced by the high-level planner (Sec.~\ref{sec:multi_robot_planning}), where the planner can also use~\eqref{eq:optimal_lqr_objective} as estimates for the actual energy cost. Next, we present the mathematical framework that encodes constraints such as inter-robot collision avoidance.

\subsection{Safety via Control Barrier Functions}\label{sec:cbf}
Denote $x=[x_1\tr, \dots, x_n\tr]\tr \in\R^{3nr}$ and $u=[u_1\tr, \dots, u_n\tr]\tr \in\R^{3n}$ as the aggregate state and control of all $n$ robots and $x^{\text{init}}$ as the initial condition. We write the aggregate dynamics as
\begin{equation}
\label{eq:agg_int_system}
\dot x = \underbrace{A\otimes I_{n\times n} \, x }_{f(x)} + \underbrace{B\otimes I_{n\times n}}_{g(x)} u,
\end{equation}
which is also a control-affine system. Given control-affine robot dynamics, CBFs are Lyapunov-like functions that can be used to guarantee collision-free maneuvers of the robot team. Specifically, safety is encoded as the forward invariance of a set: if the system starts in the set, it will not leave the set. 
The safe set of the robot team can be encoded by the superlevel set $\mathcal{C}_0$~\eqref{eq:safe_set} of some continuously differentiable function $h:\R^{3nr}\rightarrow \R$. %
For concreteness, we consider $h(x)$ with relative degree of $r\in\N$ where $r$ is the degree of the integrator system~\eqref{eq:agg_int_system}. In other words, the control $u$ only appears in the $r$-th time derivative of $h$ which can be expressed using Lie derivatives\footnote{The Lie derivative of $h$ with respect to $f$ is $L_fh(x)=\frac{\partial h(x)}{\partial x}f(x)$.}:
\begin{equation}
h^{(r)}(x,u) = L_f^{r}h(x)+L_gL_f^{r-1}h(x)u,
\end{equation}
where $L_gL_f^{r-1}h(x)\not=0$ and $$L_gL_fh(x)=\dots=L_gL_f^{r-2}h(x)=0.$$
As a result, the CBF formulations such as~\cite{ames2014control, wang2017safetytro} that require relative degree $1$ cannot be applied. Therefore, we adopt the Exponential Control Barrier Functions (ECBF~\cite{nguyen2016exponential, ames2019control}) for enforcing high relative-degree safety constraints and the forward invariance of $\mathcal{C}_0$. Next, we provide definitions for ECBF with the maximum relative degree $r$ permitted by our choice of system model~\eqref{eq:agg_int_system}.%

\begin{definition}[Exponential Control Barrier Function~\cite{nguyen2016exponential, ames2019control}]
\label{def:ECBF}
Given a set $\mathcal{C}_0\subset\R^{3nr}$ defined as the superlevel set of a $r$-times continuously differentiable function $h:\R^{3nr}\rightarrow \R$, then $h$ is an exponential control barrier function (ECBF) if there exists a row vector $K_\eta\in\R^{r}$ such that for the control system~\eqref{eq:agg_int_system},
\begin{equation}
\label{eq:ECBF_constraint_definition}
\sup_{u\in U} [L_{f}^{r}h(x) + L_gL_{f}^{r-1}h(x)u] \geq -K_\eta \eta(x),\forall x\in\mathcal{C}_0,
\end{equation}
resulting in $h(x(t))\geq C e^{(F-GK_\eta)t}\eta(x)\geq 0$ when $h(x^{\text{init}})\geq 0$, where ${\eta(x)=[h(x), \dot h(x),\dots, h^{(r-1)}(x)]\tr\in\R^{r}}$, {row vector $C=[1,0,\dots,0]\in\R^{r}$}, and $F$ and $G$ are defined in~\eqref{eq:int_system}.
\end{definition}
\begin{remark}
Note that the $K_\eta$ used in Def.~\ref{def:ECBF} is required to make the closed-loop matrix $F-GK_\eta$ that governs the evolution of $\eta$  have all strictly negative eigenvalues, which can be achieved via pole placement techniques from linear systems theory. Please refer to Sec.~\ref{sec:control_results} and~\ref{sec:hardware_experiments} for examples with $r=2$. 
Intuitively, the more negative the poles are, the more aggressive a robot's motion is allowed to be when the safety constraint is about to be violated.
\end{remark}

\subsection{Decentralized Safety Barriers}\label{sec:decentralized_cbf}

To account for inter-agent collision avoidance and the effects of propeller down-wash from the aerial vehicles, we follow~\cite{wang2017safedrones} and approximate the safety region of each vehicle $i$ as a cylinder-like super-ellipsoid. The safe set of each robot $i$ is implicitly defined by the set of positions $[p^{\text{x}}, p^{\text{y}}, p^{\text{z}}]\tr$ that satisfy:
\begin{equation}
\left[(\ctnsax{p}{i}{x}-p^{\text{x}})^2+(\ctnsax{p}{i}{y}-p^{\text{y}})^{2}\right]^2 + \left(\frac{\ctnsax{p}{i}{z}-p^{\text{z}}}{c}\right)^4 \leq D_s^4,
\end{equation}
where $D_s$ is the safety distance and $c$ is the z-axis scaling factor as visualized in Fig.~\ref{fig:circular_super_ellipsoid}. 
Note that the super-ellipsoid is appealing for robots flying tightly in 3D (see Fig.~\ref{fig:cbfqp_benchmark_example_scenario}) where it is desirable to maintain small horizontal distance but larger vertical clearance to reduce the effect of propeller down-wash.
For the safety of the entire robot team, every pair of distinct robots $i$ and $j$ has to respect each other's safety regions, leading to the following  position-based barrier function:
\begin{equation}
\label{eq:ecbf_collision}
h_{ij}(x_i, x_j) = 
\left[(\ctnsax{p}{i}{x}-\ctnsax{p}{j}{x})^2+(\ctnsax{p}{i}{y}-\ctnsax{p}{j}{y})^{2}\right]^2 + \left(\frac{\ctnsax{p}{i}{z}-\ctnsax{p}{j}{z}}{c}\right)^4 - D_s^4,
\end{equation}
which has a relative degree of $r$. The corresponding ECBF constraint is
\begin{equation}
\label{eq:ecbf_collision_constraint}
h^{(r)}_{ij} + K_\eta \cdot [h_{ij}, \dot h_{ij}, \dots, h^{(r-1)}_{ij} ] \geq 0,
\end{equation}
where $h^{(r)}_{ij}$ is affine in both $u_i$ and $u_j$. 
Enforcing~\eqref{eq:ecbf_collision_constraint} requires considering the controls from both robots $i$ and $j$ together, thus requiring central coordination. However, notice that the ECBF constraint defines the admissible control space, so it is possible to decentralize the constraint by partitioning the space such that every robot only uses control input from its own partition~\cite{wang2016safety}.
First, we introduce a useful structure in $h_{ij}^{(r)}$ that enables the distributed constraint enforcement.

\begin{figure} [t!]
\centering
\includegraphics[width=0.7\columnwidth, trim={0cm 0cm 0cm 0cm}, clip]{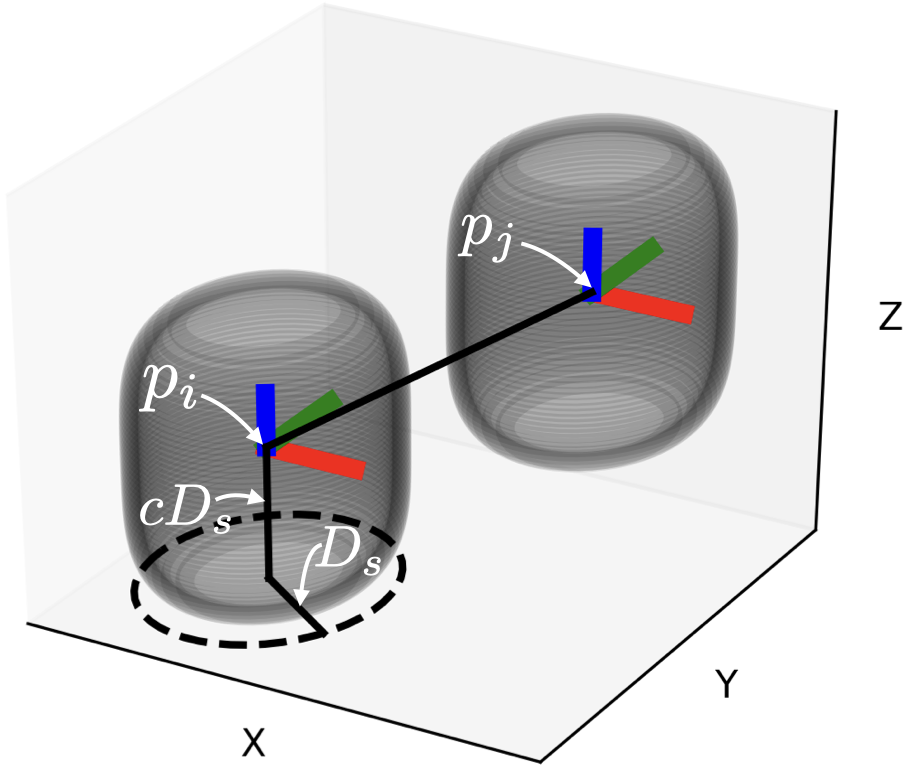}
\caption{Illustration of the safe set encoded by the circular super-ellipsoid barrier function~\eqref{eq:ecbf_collision} for robots $i$ and $j$ with position $p_{i}$ and $p_{j}$ respectively. Note that $D_s$ is the safety radius and $2cD_s$ is the height of the ellipsoid.}
\label{fig:circular_super_ellipsoid}
\end{figure}

\begin{prop}\label{prop:ecbf_affine_term}
Given system model~\eqref{eq:int_system} with order $r\geq 1$, the $r$-th derivative of the barrier function~\eqref{eq:ecbf_collision} can be written as
\begin{equation}
h_{ij}^{(r)} =  L_f^{r}h_{ij}(x_i,x_j)+ \underbrace{L_gL_f^{r-1}h(x_i,x_j)u}_{=A_{ij}(u_i - u_j)},
\end{equation}
where the row vector $A_{ij}\in\R^{3}$ has a fixed expression
\begin{equation}
\label{eq:ecbf_Aij}
A_{ij} = 4\left[(\Delta_x^2+\Delta_y^2)\Delta_x, (\Delta_x^2+\Delta_y^2)\Delta_y, \Delta_z^3/c\right],
\end{equation}
with $\Delta_x=\ctnsax{p}{i}{x}-\ctnsax{p}{j}{x}$, $\Delta_y=\ctnsax{p}{i}{y}-\ctnsax{p}{j}{y}$ and $\Delta_z=(\ctnsax{p}{i}{z}-\ctnsax{p}{j}{z})/c$ as the x, y position differences and scaled z position difference.
\end{prop}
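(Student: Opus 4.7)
The key structural observation is that $h_{ij}$ depends on the two robot states only through the position difference $q \defeq p_i - p_j \in \R^3$. Since each robot obeys the same integrator model~\eqref{eq:int_system}, the difference trajectory $q$ also satisfies an integrator of order $r$: $q^{(k)} = \dctns{p}{i}^{(k)} - \dctns{p}{j}^{(k)}$ is a pure state function for $k < r$, while $q^{(r)} = u_i - u_j$. So I would reduce the whole computation to iterated differentiation of a scalar map $\tilde h(q) \defeq h_{ij}$ along this single auxiliary integrator trajectory.

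First, I would verify the relative-degree-$r$ claim by a simple induction on $k = 1,\dots,r-1$: differentiating $h_{ij}$ once gives $\dot h_{ij} = \nabla_q \tilde h(q)\cdot(\dctns{p}{i} - \dctns{p}{j})$ by the chain rule and the identity $\partial h_{ij}/\partial p_j = -\partial h_{ij}/\partial p_i$. By induction, $h_{ij}^{(k)}$ is a polynomial in $q, \dot q, \dots, q^{(k)}$; for $k\leq r-1$ each $q^{(j)}$ is a state component, so no control input appears and $L_g L_f^{j-1}h_{ij} = 0$ for $j \leq r-1$, matching the relative-degree hypothesis stated before the proposition.

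Next, taking the $r$-th derivative and isolating the terms that contain $q^{(r)}$, iterated chain rule / Faà di Bruno gives the decomposition
\begin{equation*}
h_{ij}^{(r)} \;=\; \underbrace{\nabla_q \tilde h(q)\cdot q^{(r)}}_{\text{only }r\text{-th-order term}} \;+\; \Phi\bigl(q,\dot q,\dots,q^{(r-1)}\bigr),
\end{equation*}
where $\Phi$ collects all products of lower-order derivatives and therefore depends only on the state $x = (x_i, x_j)$. Identifying the first term with $L_g L_f^{r-1} h_{ij}(x)\,u$ and the second with $L_f^{r} h_{ij}(x)$ and substituting $q^{(r)} = u_i - u_j$, I obtain the claimed form $h_{ij}^{(r)} = L_f^{r} h_{ij} + A_{ij}(u_i - u_j)$ with $A_{ij} = \nabla_q \tilde h(q)$.

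Finally, I would carry out the routine computation of $\nabla_q \tilde h(q)$: differentiating the two summands of $h_{ij}$ in~\eqref{eq:ecbf_collision} with respect to the components of $q = (\Delta_x, \Delta_y, c\Delta_z)$ gives $4(\Delta_x^2+\Delta_y^2)\Delta_x$, $4(\Delta_x^2+\Delta_y^2)\Delta_y$, and $4\Delta_z^3/c$, which is exactly~\eqref{eq:ecbf_Aij}. The one place that could be error-prone is the $z$-component because of the scaling factor $c$, so I would double-check via the substitution $p_i^{\mathrm z} - p_j^{\mathrm z} = c\Delta_z$; but there is no real obstacle beyond bookkeeping, since the antisymmetry $\partial_{p_i} h_{ij} = -\partial_{p_j} h_{ij}$ does all the heavy lifting in producing the clean $u_i - u_j$ structure that makes decentralized enforcement possible in Sec.~\ref{sec:decentralized_cbf}.
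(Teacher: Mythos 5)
Your proposal is correct, and it reaches the same decomposition as the paper, but by a somewhat different route. The paper proves the proposition by explicit induction on $r$: it computes $\dot h_{ij}$ directly for the base case $r=1$, assumes the claimed form $h_{ij}^{(\tilde r)} = A_{ij}\Delta_u + L_f^{\tilde r}h_{ij}$, differentiates once more term by term, and regroups so that every product of lower-order derivatives is absorbed into $L_f^{\tilde r+1}h_{ij}$ while the coefficient of $p_i^{(\tilde r+1)} - p_j^{(\tilde r+1)}$ is verified to remain exactly $A_{ij}$. You instead exploit the structural fact that $h_{ij}$ factors through the difference variable $q = p_i - p_j$, which is itself an order-$r$ integrator driven by $u_i - u_j$, and then invoke the general principle (iterated chain rule / Fa\`a di Bruno) that the highest-order derivative $q^{(r)}$ enters $h_{ij}^{(r)}$ only linearly with coefficient $\nabla_q \tilde h(q)$; the gradient computation then yields~\eqref{eq:ecbf_Aij}, including the correct $1/c$ factor in the $z$-component. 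Your argument is more conceptual and immediately generalizes to any barrier function of the position difference, whereas the paper's explicit induction has the practical advantage of producing a recursive closed-form expression for the drift term $L_f^{r}h_{ij}$ as a byproduct, which is needed to actually evaluate $b_{ij}$ in~\eqref{eq:ecbf_bij} when implementing the controller. Both establish the antisymmetric $A_{ij}(u_i - u_j)$ structure that the decentralization in Sec.~\ref{sec:decentralized_cbf} relies on.
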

\begin{proof}
See Appendix.~\ref{appendix:ecbf_affine_term_proof}.
\end{proof}

With Prop.~\ref{prop:ecbf_affine_term}, we now introduce the distributed ECBF constraints.
\begin{prop}
Given system model~\eqref{eq:int_system} with order $r\geq 1$ and an exponential control barrier function $h_{ij}$~\eqref{eq:ecbf_collision} that satisfies Def.~\ref{def:ECBF} with the associated row vector $K_\eta$, the inequality constraint~\eqref{eq:ecbf_collision_constraint} can be distributed to robots $i$ and $j$ as
\begin{align}
-A_{ij}u_i &\leq \frac{\alpha_i}{\alpha_i+\alpha_j} b_{ij}, \label{eq:dec_ecbf_constraint_ui}\\
A_{ij}u_j &\leq \frac{\alpha_j}{\alpha_i+\alpha_j} b_{ij}, \label{eq:dec_ecbf_constraint_uj}
\end{align}
where $\alpha_i, \alpha_j> 0$, $A_{ij}$ is defined in~\eqref{eq:ecbf_Aij} and
\begin{equation}
\label{eq:ecbf_bij}
b_{ij} = K_\eta\cdot \left[h_{ij}, \dot h_{ij}, \dots, h^{(r-1)}_{ij} \right] + L_{f}^{r}h_{ij}.
\end{equation}
Moreover, if the controllers $u_i$, $u_j$ for $i\not= j$ satisfy the decentralized ECBF constraints~\eqref{eq:dec_ecbf_constraint_ui} and~\eqref{eq:dec_ecbf_constraint_uj} and initial states $x_{i}^{\text{init}}, x_{j}^{\text{init}}$ satisfy $h_{ij}(x_{i}^{\text{init}}, x_{j}^{\text{init}})\geq 0$, then all robots are guaranteed to be safe.
\end{prop}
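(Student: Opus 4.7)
The plan is to show that the two decentralized inequalities~\eqref{eq:dec_ecbf_constraint_ui}--\eqref{eq:dec_ecbf_constraint_uj} are a sufficient condition for the centralized ECBF inequality~\eqref{eq:ecbf_collision_constraint}, and then invoke the forward invariance property baked into Definition~\ref{def:ECBF} to obtain team safety. Concretely, I would start from the centralized constraint $h_{ij}^{(r)} + K_\eta\,\eta(x) \geq 0$. Using Proposition~\ref{prop:ecbf_affine_term} to substitute $h_{ij}^{(r)} = L_f^{r} h_{ij} + A_{ij}(u_i - u_j)$ and the definition~\eqref{eq:ecbf_bij} of $b_{ij}$, the centralized constraint becomes the single scalar inequality $A_{ij}u_j - A_{ij}u_i \leq b_{ij}$.

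Next I would observe that for any $\alpha_i,\alpha_j>0$ the coefficients $\frac{\alpha_i}{\alpha_i+\alpha_j}$ and $\frac{\alpha_j}{\alpha_i+\alpha_j}$ form a convex combination that sums to one, so adding the two decentralized inequalities~\eqref{eq:dec_ecbf_constraint_ui} and~\eqref{eq:dec_ecbf_constraint_uj} termwise yields exactly $-A_{ij}u_i + A_{ij}u_j \leq b_{ij}$, which is the centralized constraint recovered in the previous step. Hence any pair $(u_i,u_j)$ admissible under the decentralized constraints is admissible under the centralized ECBF constraint along every trajectory of~\eqref{eq:agg_int_system}.

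For the safety claim, I would then directly apply the ECBF forward invariance property from Definition~\ref{def:ECBF}: since $h_{ij}$ is an ECBF with the chosen $K_\eta$ (so that $F-GK_\eta$ is Hurwitz), and the closed-loop controllers satisfy the centralized ECBF inequality at all times $t\geq 0$, the comparison-lemma style bound $h_{ij}(x(t)) \geq C e^{(F-GK_\eta)t}\eta(x^{\text{init}}) \geq 0$ applies. Combined with the assumption $h_{ij}(x_i^{\text{init}},x_j^{\text{init}}) \geq 0$, which by $F-GK_\eta$ being Hurwitz propagates the nonnegativity of $\eta$ forward, the pair $(i,j)$ stays in the pairwise safe set $\{h_{ij}\geq 0\}$. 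Intersecting over all distinct pairs $i\neq j$ yields team safety.

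The only subtlety worth pausing on is that the decentralized split is \emph{sufficient but not necessary}: not every $(u_i,u_j)$ satisfying the centralized inequality admits such a decomposition, so the parameters $\alpha_i,\alpha_j$ introduce some conservativeness. I would explicitly acknowledge this, but since the statement only claims that~\eqref{eq:dec_ecbf_constraint_ui}--\eqref{eq:dec_ecbf_constraint_uj} imply~\eqref{eq:ecbf_collision_constraint}, this does not affect correctness. The main routine care is verifying that the decentralized inequalities use $-A_{ij}u_i$ and $+A_{ij}u_j$ with the correct signs so that their sum cleanly reconstructs $A_{ij}(u_j-u_i) \leq b_{ij}$, which follows from the antisymmetry of the $u$-dependent term in $h_{ij}^{(r)}$ established in Proposition~\ref{prop:ecbf_affine_term}.
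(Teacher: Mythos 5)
Your proof is correct and follows exactly the paper's argument: summing the two decentralized inequalities (whose responsibility weights add to one) recovers the centralized ECBF constraint, and then the forward-invariance property of Definition~\ref{def:ECBF} together with $h_{ij}(x_i^{\text{init}},x_j^{\text{init}})\geq 0$ gives pairwise and hence team safety. Your added remark on the conservativeness of the split is not part of the paper's proof but matches the discussion that follows it in the text.
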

\begin{proof}
If decentralized constraints~\eqref{eq:dec_ecbf_constraint_ui} and~\eqref{eq:dec_ecbf_constraint_uj} are satisfied, then the centralized constraint~\eqref{eq:ecbf_collision_constraint} must be satisfied, which can be shown via simple addition of the left hand sides and right hand sides of~\eqref{eq:dec_ecbf_constraint_ui} and~\eqref{eq:dec_ecbf_constraint_uj}. As $h_{ij}$ is an ECBF satisfying Def.~\ref{def:ECBF} and $h_{ij}(x_{i}^{\text{init}}, x_{j}^{\text{init}})\geq 0$ for all $i\not=j$, all pair-wise (thus the team) safety will be guaranteed.
\end{proof}
With the decentralized ECBF constraints, every robot can compute its own control while satisfying $n-1$ pair-wise collision avoidance constraints; on the other hand, $\frac{n(n-1)}{2}$ pair-wise constraints are considered in the centralized case. The computational savings make decentralization attractive when computation is limited, but one must note that the decentralized ECBF constraints lead to smaller admissible control space for every robot, thereby causing infeasibility in extreme cases. However, this issue did not occur during our empirical evaluation, because we only considered relatively small robot teams. More discussion of this issue and potential resolutions can be found in the literature (e.g.,~\cite{wang2017safetytro}), but is not a central focus of this paper.

\subsection{Decentralized CBF-QP with Weighted Norm Penalty}\label{sec:weighted_cbf_qp}

We now present the proposed decentralized controller that encourages timely arrivals at desired sensing configurations while satisfying inter-robot collision avoidance constraints. For robot $i$ with state $x_i$ at time $t$, its control input $u_i$ should match the nominal time-varying policy $\ctnsax{u}{i}{LQR}(x_i, t)$~\eqref{eq:closedloop_lqr_control} as best as possible when safety is not at risk, which can be captured via the following penalty:
\begin{equation}\label{eq:deviation_nominal_control}
\norm{u_i - \ctnsax{u}{i}{LQR}(x_i, t)}^2.
\end{equation}
In addition, the robots should match the desired mission rate in order to arrive at the sensing configurations as timely as possible---a desirable property for information gathering since performance is time-sensitive. To capture the mission rate, we utilize the Lyapunov function associated with the LQR problem~\eqref{eq:lqr}:
\begin{equation}
\label{eq:lqr_V}
V_i(x_i, t) = \frac{1}{2} d_{i,k}(x_i,t)\tr G_{k}(t)\inv d_{i,k}(x_i,t),
\end{equation}
where $V_i:\R^{3r}\times\R\rightarrow\R$ is a time-varying function for robot $i$ given the final state difference $d_{i,k}(x_i,t)$ and the reachability gramian $G_{k}(t)$ for reaching the $k$-th sensing configuration. 
The following penalty captures the difference between the mission rates $\dot V_i$ (induced by $u_i$) and the optimal mission rate $\ctnsax{\dot V}{i}{LQR}$ (induced by the optimal controller $\ctnsax{u}{i}{LQR}$):
\begin{align}
&\norm{\dot V_i-\ctnsax{\dot V}{i}{LQR}}^2 \label{eq:Vdot_penalty}\\
=& \norm{\frac{\partial V_i}{\partial t}+\frac{\partial V_i}{\partial x_i}(Ax_i+Bu_i) -\frac{\partial V_i}{\partial t}-\frac{\partial V_i}{\partial x_i}(Ax_i+B\ctnsax{u}{i}{LQR})}^2 \nonumber\\
=& \norm{\frac{\partial V_i}{\partial x_i}B(u_i-\ctnsax{u}{i}{LQR})}^2, \label{eq:Vdot_penalty_intermediate}
\end{align}
where interestingly:
\begin{align}
\frac{\partial V_i}{\partial x_i}B &= -d_{i,k}(x_i,t)\tr G_{k}(t)\inv e^{A(t_{k+1}-t)}B \\
&= -(\ctnsax{u}{i}{LQR})\tr R,
\end{align}
where $R\succ 0$ is the weight matrix for penalizing control efforts~\eqref{eq:lqr}. Therefore, penalty~\eqref{eq:Vdot_penalty_intermediate} can be rewritten as:
\begin{align}
\norm{\frac{\partial V_i}{\partial x_i}B(u_i-\ctnsax{u}{i}{LQR})}^2
&=\norm{-(\ctnsax{u}{i}{LQR})\tr R(u_i-\ctnsax{u}{i}{LQR})}^2 \nonumber\\
&=\norm{u_i-\ctnsax{u}{i}{LQR}}_{\overline W}^2, \label{eq:Vdot_penalty_rank1}
\end{align}
where $\overline W=R\ctnsax{u}{i}{LQR}(\ctnsax{u}{i}{LQR})\tr R$ is a rank-$1$ matrix with spectral norm of $\norm{\overline W}_2 = (\ctnsax{u}{i}{LQR})\tr R^2\ctnsax{u}{i}{LQR}$.

In order to trade off the needs to match the nominal control and the desired mission rate, our proposed controller optimizes the weighted sum of~\eqref{eq:deviation_nominal_control} and~\eqref{eq:Vdot_penalty_rank1}, resulting in the following QP with the weighted norm $\norm{\cdot}_{W(\beta)}$:
\begin{align}
\min_{u_i} \quad & \norm{u_i - \ctnsax{u}{i}{LQR}(x_i, t)}_{W(\beta)}^2  \tag{\textbf{Weighted CBF-QP}}\label{eq:weighted_cbf_qp}\\
\textrm{s.t.} \quad & -A_{ij}u_i \leq \frac{\alpha_i}{\alpha_i+\alpha_j} b_{ij},\quad\forall j\not=i, \label{eq:weighted_cbf_qp_ecbf_constraint}%
\end{align}
where
\begin{equation}
W(\beta) = I_{3\times3} + \beta \overline W/\norm{\overline W}_2 \succ 0,\quad \beta \geq 0.
\end{equation}
The effect of $\beta$ is illustrated in Fig.~\ref{fig:weighted_cbfqp_illustration}. Importantly, when the constraints~\eqref{eq:weighted_cbf_qp_ecbf_constraint} are not active, the optimal solution matches the nominal control $\ctnsax{u}{i}{LQR}(x_i, t)$, because the objective is quadratic and $W$ is positive definite.
When safety constraints are active, increasing $\beta$ leads to solutions that better match the desired mission rate $\ctnsax{\dot V}{i}{LQR}$ rather than minimizing the deviation from the nominal control.

\begin{figure} [t!]
\centering
\includegraphics[width=0.8\columnwidth, trim={0cm 0cm 0cm 0cm}, clip]{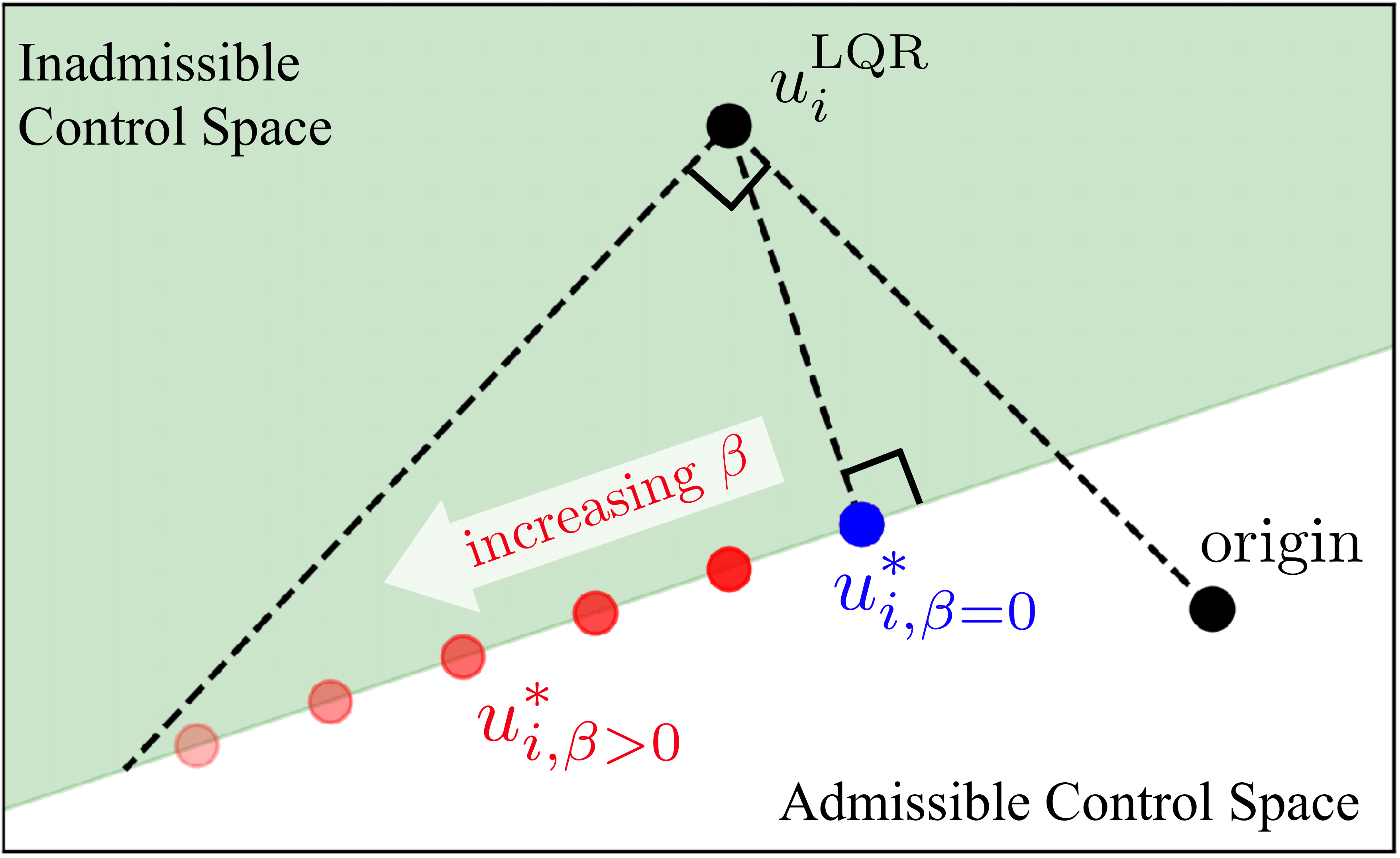}
\caption{Illustration of the effect of $\beta\geq 0$ on the solution of weighted CBF-QP, where $R$ is set as an identity matrix. When the nominal LQR constroller $\ctnsax{u}{i}{LQR}$ lies in the inadmissible control space, $u^*_{i,\beta=0}$ represents the solution of CBF-QP without weighted penalty which chooses the nearest admissible control in the $L_2$ sense. As $\beta$ increases, the optimization focuses more on matching the optimal mission rate $\ctnsax{\dot V}{i}{LQR}$; as a result, the vector component of $u^*_{i,\beta>0}$ that points in the direction of $\ctnsax{u}{i}{LQR}$ becomes greater and closer in magnitude with $\ctnsax{u}{i}{LQR}$ as $\beta$ approaches infinity if safety permits.%
}
\label{fig:weighted_cbfqp_illustration}
\end{figure}

\begin{remark}
Many additional control or state constraints can be added to the~\ref{eq:weighted_cbf_qp} formulation. For example, maximum infinity norm can be imposed over the control input, which is an affine constraint in the control. General box constraints can also be imposed via ECBF on each robot $i$ so that every entry of the state $x_i$ is limited to a range. For example, the position $p_i$ of a second order integrator should be limited within some bounded experiment area, and the velocity $\dot p_i$ should be bounded based on robot characteristics. 
\end{remark}

\subsection{Performance Guarantee of the Overall System}\label{sec:controller_impact}
The proposed low-level controller is used to execute the trajectories planned by the high-level planner proposed in Sec.~\ref{sec:dls}. The worst-case performance guarantee of the overall system is preserved only when the controller (a) achieves the desired sensing configurations in a timely fashion, and (b) consumes the same amount of energy as the energy cost used during planning. When the safety constraints are not active, we are able to not only satisfy (a) via the fixed-final-state LQR, but also achieve (b) by using the exact energy expenditure for each trajectory during planning thanks to the closed-form expression for energy cost~\eqref{eq:optimal_lqr_objective}. However, the performance guarantee is not preserved if the previous two conditions do not hold, for example, when the robots deviate from the nominal LQR trajectories to ensure safety. In this case, the proposed controller tries to reach the desired sensing configurations as timely as possible, with gracefully increasing energy expenditure as the mission becomes more challenging, as shown in the benchmark results in Sec.~\ref{sec:control_results} and Fig.~\ref{fig:cbfqp_benchmark_weighted_norm}.
\section{Analysis of Planning Algorithm}\label{sec:planning_results}
The proposed planning algorithm \dls{} is first analyzed in two target tracking simulations (Sec.~\ref{sec:exp_multi_dynamic_targets} and Sec.~\ref{sec:exp_hetero}) based on objective values, computation, communication, and ability to handle heterogeneous robots. Subsequently, a hardware-in-the-loop benchmark is conducted over a distributed network with delays (Sec.~\ref{sec:hil}). For all scenarios, $\dls{}$ is compared against coordinate descent (\cd{}~\cite{atanasov2015decentralized}), a state-of-the-art algorithm for multi-robot target tracking that, however, assumes monotonicity of the objective. Planning for robots sequentially, \cd{} allows every robot to incorporate the plans of previous robots. We also allow \cd{} to not assign anything to a robot if it worsens the objective. Reduced value iteration~\cite{atanasov2014information} is used to generate trajectories for both algorithms, {where its parameters $\epsilon,\delta\geq0$ are used to improve computational efficiency by pruning search nodes with similar information gain and close spatial proximity, respectively.}
Comparisons between \cls{} and \dls{} are omitted because the two algorithms empirically achieve the same average performance. We set $\alpha=1$ arbitrarily, because tuning it was not effective due to the large number of trajectories $N$.

Both \dls{} and \cd{} are implemented in C++ and evaluated in simulations on a laptop with an Intel Core i7 CPU.  For \dls{}, every robot owns separate threads, and executes Alg.~\ref{alg:local_exchange} over 4 extra threads to exploit its parallel structure. Similarly, \cd{} allows every robot to use 4 threads and additionally incorporates accelerated greedy~\cite{minoux1978accelerated} for extra speed-up. 
For the hardware-in-the-loop benchmark, we use $6$ laptops (robots) for running distributed computational nodes over a ROS network. 

\subsection{Characteristics of Simulated Robots}
Given initial state $x_{i,0} \in \mathcal{X}_i$ for robot $i\in\mathcal{R}_\mathcal{S}$ who follows the control sequence $ (u_{i,0},\dots,u_{i,K-1} )=\sigma_i\in \mathcal{S}$, the resultant states are $(x_{i,1}, \dots, x_{i,K})$ based on dynamics~\eqref{eq:robot_dynamics}. The energy cost $C(\mathcal{S})$ may also be state-dependent and it is defined as:
\begin{equation}\label{exp:energy_cost}
    C(\mathcal{S}) \defeq \sum_{i\in\mathcal{R}_\mathcal{S}} m_i \sum_{t=0}^{T-1} \left( \cctrl_i(u_{i,k}) + \cstate_i(x_{i,k}) \right),
\end{equation}
where the state-dependent cost $\cstate_i(\cdot)$ and control-dependent cost $\cctrl_i(\cdot)$ are defined based on robot types---in our case, robot $i$ is either an unmanned ground vehicle (UGV) or an unmanned aerial vehicle (UAV).  
Note that decomposition between state and control is not required for our framework to work. The setup for robots are summarized in Table~\ref{tab:robot_setup}. For simplicity, both the UGVs and UAVs follow differential-drive models for implementation convenience, with sampling period $\tau=0.5$ and motion primitives consisting of linear and angular velocities $\{u=(\nu, \omega) \mid \nu \in \{0,8\}\text{ m/s},\ \omega\in\{0,\pm\frac{\pi}{2}\}\text{ rad/s}\}$. We consider muddy and windy regions that incur state-dependent costs for UGVs and UAVs, respectively.
The robots have range and bearing sensors, whose measurement noise covariances grow linearly with target distance. Within limited ranges and field of views (FOVs), the maximum noise standard deviations are $0.1\text{ m}$ and $5^\circ$ for range and bearing measurements, respectively. Outside the ranges or field of views, measurement noise becomes infinite. %
See~\cite{schlotfeldt2018anytime} for more details.

\begin{table}[b]
    \caption{Robot setup in two simulations.}\label{tab:robot_setup}
    \centering
        \resizebox{\columnwidth}{!}{%
            \bgroup
            \def\arraystretch{1.3}%
            \begin{tabular}{@{\extracolsep{2pt}} m{0.3cm} ccc cc c c @{}} %
            \toprule
            \multirow{2}{*}{} & \multicolumn{3}{c}{$\cctrl(u)$, $u$ given as} & \multicolumn{2}{c}{$\cstate(x)$, $x$ in} & FOV ($^\circ$) & \multicolumn{1}{c}{Range (m)}\tabularnewline
            \cline{2-4} \cline{5-6} \cline{7-7} \cline{8-8} 
             &$0,0$ & $0,\frac{\pm\pi}{2}$ & $8,\frac{\pm\pi}{2}$ & Mud & Wind & Exp.1\&2 & Exp.1\&2 \tabularnewline 
            \midrule
            UGV & 0 & 1 & 2 & 3 & / & 160 & 6\ \&\ 15\tabularnewline
            UAV & 2 & 2 & 4 & / & 3 & 360 & /\ \&\ 20\tabularnewline
            \bottomrule
            \end{tabular}
            \egroup%
        }
\end{table}

\subsection{Simulation 1: Multi-Robot Dynamic Target Tracking}\label{sec:exp_multi_dynamic_targets}
Here we show the computation and communication savings for \dls{}, and compare the performance of \dls{} and \cd{} (see Figs.~\ref{fig:dls_analysis} and \ref{fig:dls_obj_control_sensing_time}).
The scenario involves $2$--$10$ UGVs trying to estimate the positions and velocities of the same number of dynamic targets. The targets follow discretized double integrator models corrupted by Gaussian noise, with a top speed of $2$~m/s. Robots and targets are spawned in a square arena whose sides grow from $40\text{ m}$ to $60\text{ m}$, and $50$ random trials are run for each number of robots.

\begin{figure} [t!]
\centering
\includegraphics[width=0.9\columnwidth, trim=10 0 45 13, clip]{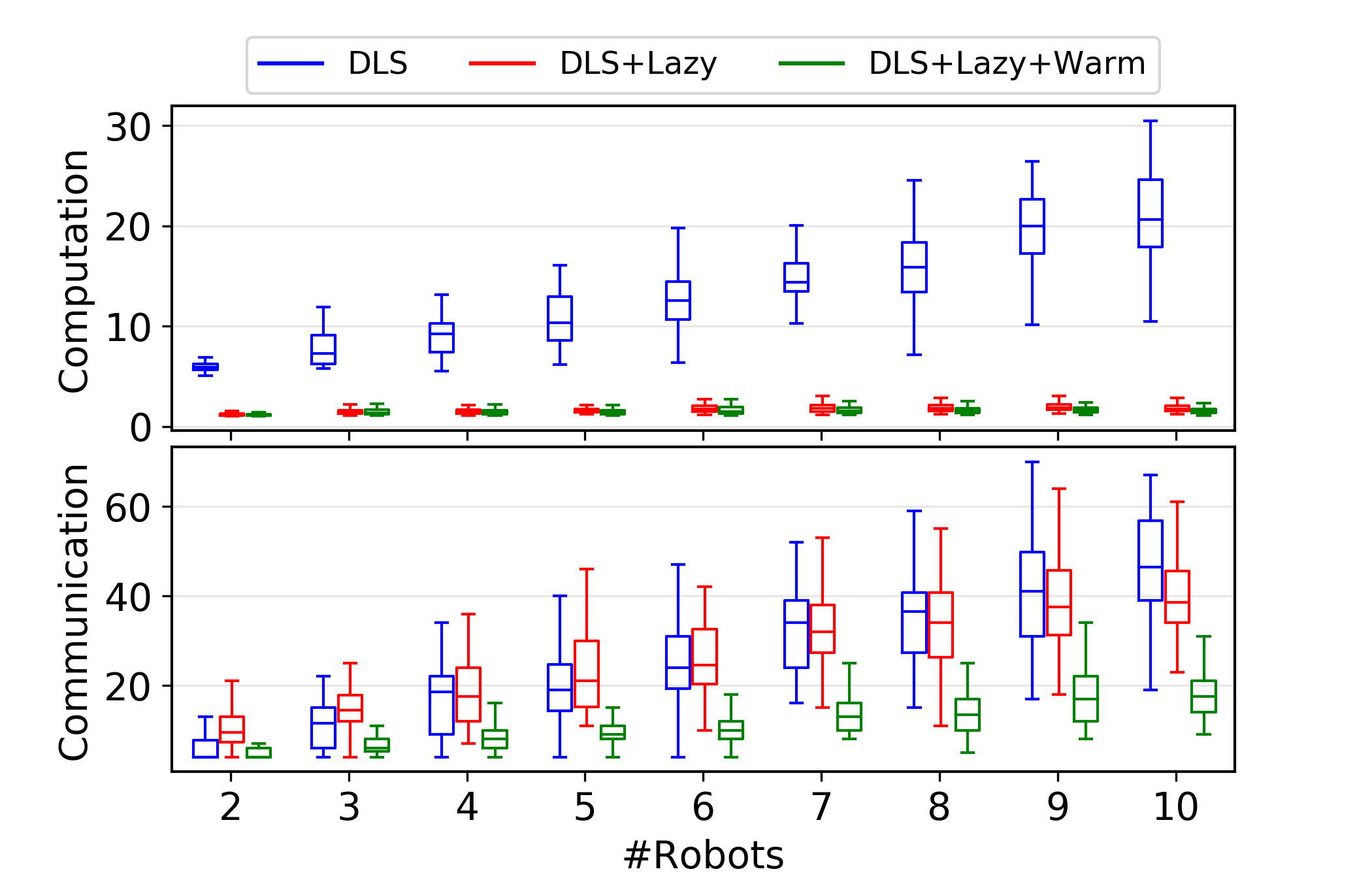}
\caption{Computation and communication savings afforded by lazy search (\textit{Lazy}) and greedy warm start (\textit{Warm}) for \dls{}.
Computation is measured by total oracle calls divided by the number of trajectories $N$, where $N$ reaches around $12500$ for $10$ robots. Communication is measured by the number of proposal exchanges. 
Combining lazy search and greedy warm start (green) leads to 80--92\% computation reduction, and up to 60\% communication reduction compared to the naive implementation (blue) on average.
}\label{fig:dls_analysis}
\end{figure}
\begin{figure} [t!]
\centering
\includegraphics[width=0.99\columnwidth, trim=20 15 20 0, clip]{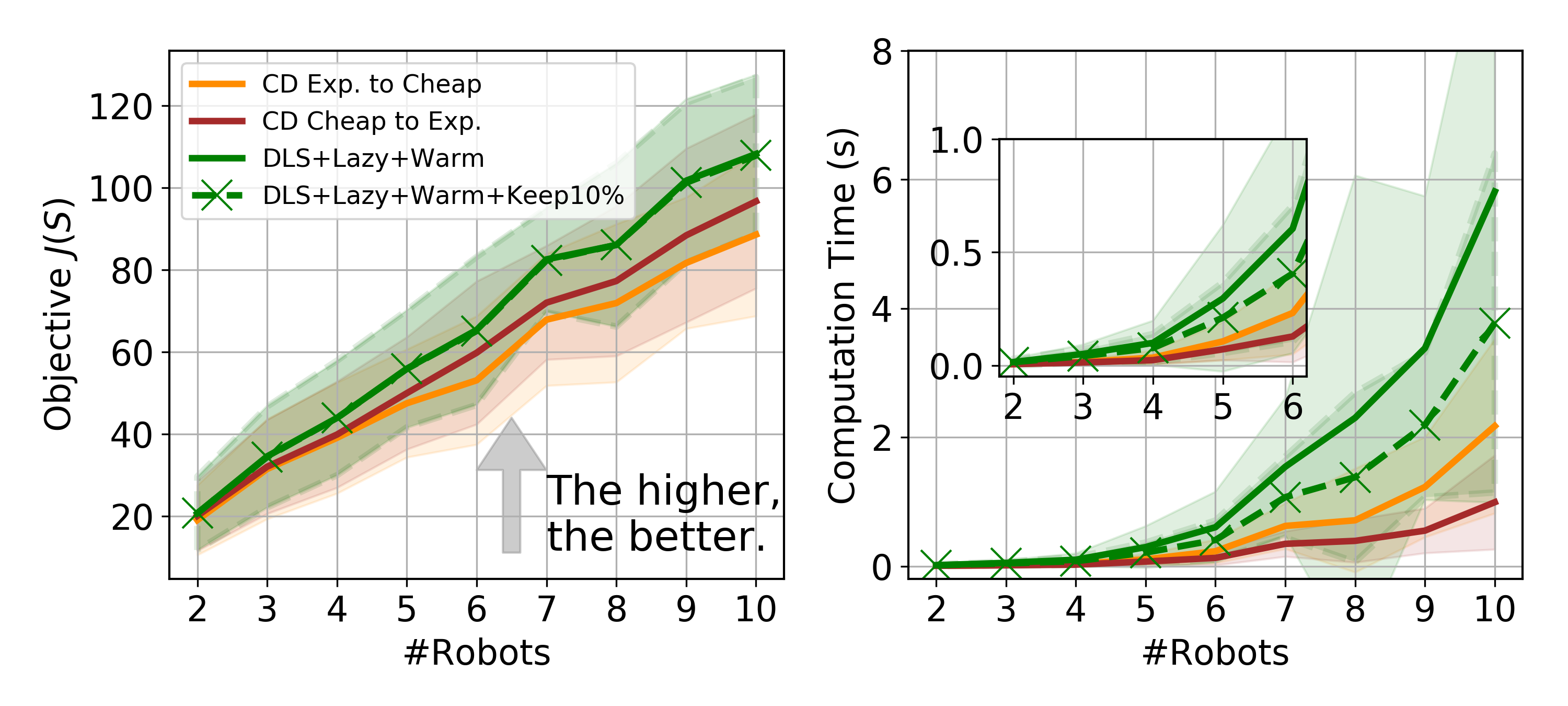}
\caption{
Objective values and computation time (s) for variants of \dls{} and \cd{}, where the lines and shaded areas show the mean and standard deviation, respectively. The time excludes the trajectory generation time ($<2$\text{ s}), which is the same for every algorithm.
\dls{} (solid green) consistently outperforms \cd{} in optimizing the objective, where it is better for \cd{} to plan from cheaper to more expensive robots (brown), rather than the reverse order (orange). The performance gap between \dls{} and \cd{} widens as more costly robots increase non-monotonicity of the problem.
However, \dls{} requires longer run-time, which in practice can be alleviated by using a portion of all trajectories. This invalidates its worst-case guarantee, but \dls{} solution based on the best $10\%$ of each robot's trajectories (green crosses) still outperforms \cd{}.}\label{fig:dls_obj_control_sensing_time}
\end{figure}

Non-monotonicity in the problem is accentuated by an increasing penalty for control effort of additional robots,
by setting  $m_i=i$ for each robot $i$ as defined in~\eqref{exp:energy_cost} (\ie, the $10\text{-th}$ added robot is $10$ times more expensive to move than the first). Note that state-dependent cost is set to $0$ for this experiment.
Trajectory generation has parameters $\epsilon=1$ and $\delta=2$ for horizon $T=10$. 
As the planning order is arbitrary for \cd{}, we investigate two planning orders: first from cheaper to more expensive robots, and then the reverse.
Intuitively and shown in Fig.~\ref{fig:dls_obj_control_sensing_time}, the former should perform better, because the same amount of information can be gathered while spending less energy. While other orderings are possible (\eg,\cite{jorgensen2018team,dames2017detecting}), we only use two to show \cd{}'s susceptibility to poor planning order. 
For a fair comparison between \dls{} and \cd{}, we use a fixed set of trajectories generated offline, but ideally trajectories should be replanned online for adaptive dynamic target tracking.

Proposed methods for improving naive distributed execution of local search, namely lazy search (\textit{Lazy}) and greedy warm start (\textit{Warm}), are shown to reduce computation by 80--92\% and communication by up to 60\% on average, as shown in Fig.~\ref{fig:dls_analysis}. 
As expected, when there are few robots with similar control penalties, the trade-off objective is easy to optimize, and \dls{} and \cd{} perform similarly as seen in Fig.~\ref{fig:dls_obj_control_sensing_time}. However, as more costly robots are added, their contributions in information gain are offset by high control penalty, making the problem harder to optimize. Therefore, the performance gap between \dls{} and \cd{} widens, because \cd{} requires monotonicity to maintain its performance guarantee, but \dls{} does not.
From Fig.~\ref{fig:dls_obj_control_sensing_time}, we can see that planning order is critical for \cd{} to perform well, yet a good order is often unknown prior to a mission.
Compared to \cd{} which requires only $n-1$ communication rounds for $n$ robots, \dls{} requires more for its performance. 
For practical concerns to save more time, \dls{} with down-sampled trajectories (\eg, keeping the best $10\%$ of each robot's trajectories) still produces better solution than \cd{}, but the guarantee of \dls{} no longer holds.

\subsection{Simulation 2: Heterogeneous Sensing and Control} \label{sec:exp_hetero}

\begin{figure} [tp!]
\centering
\includegraphics[width=0.7\columnwidth, trim=0 0 20 25, clip]{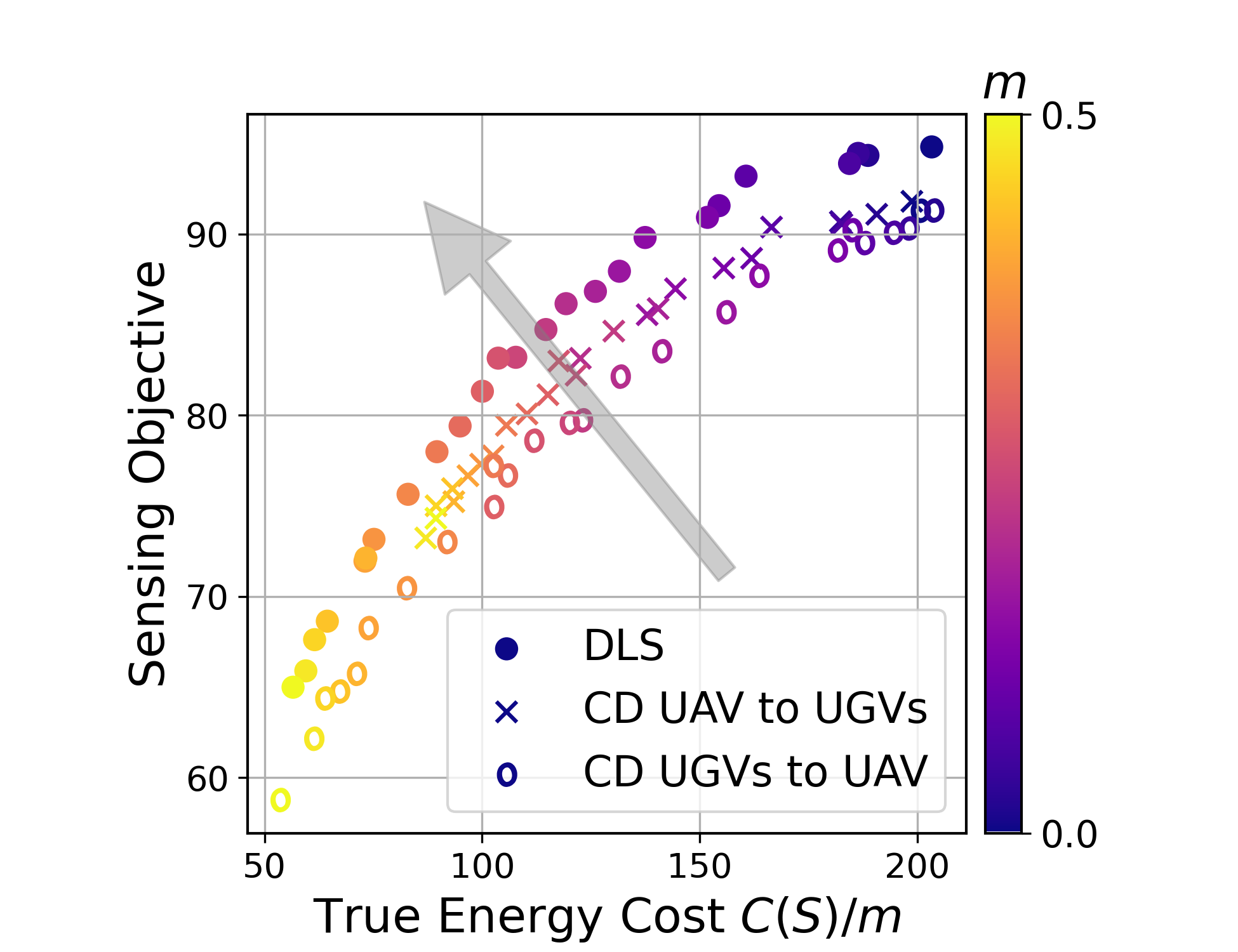}
\caption{Trade-off between sensing performance (mutual information~\eqref{eq:sensing_and_energy_objective}) and the true energy expenditure $C(\mathcal{S})/m$ in heterogeneous robot experiments produced by \dls{} and \cd{}, where it is better to be in the upper left pointed by the gray arrow. Each point is an average obtained over $50$ trials for a fixed $m$, where we set $m_i=m$ for each robot $i$ to penalize the team energy expenditure per~\eqref{exp:energy_cost}.}
\label{fig:hetero_tradeoff}
\end{figure}

\begin{figure} [tp!]
\centering
\includegraphics[width=0.95\linewidth, trim=85 55 65 30, clip]{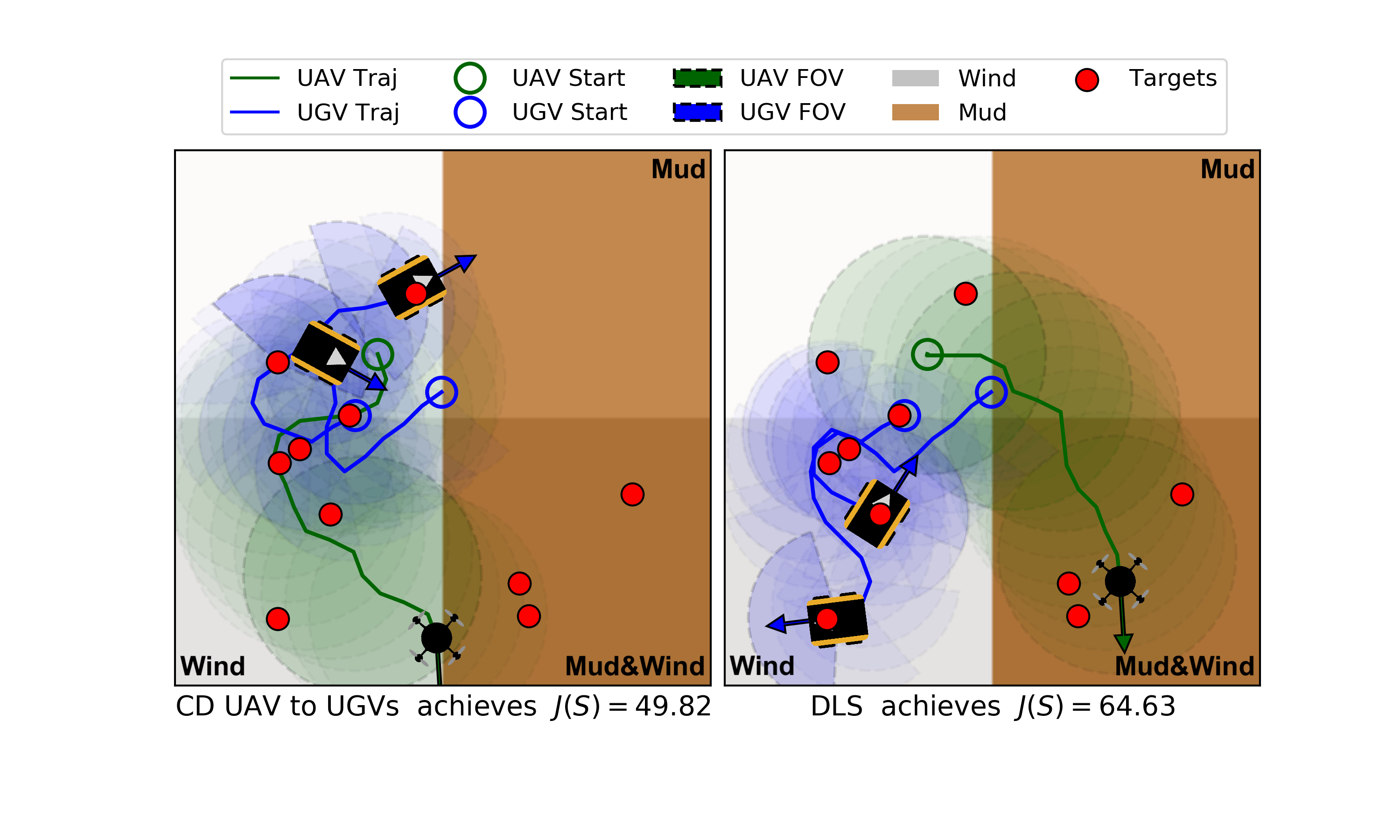}
\caption{Example solutions from \cd{} (left) and \dls{} (right) for 2~UGVs and 1~UAV with $m=0.2$ that penalizes energy cost $C(\mathcal{S})$ in~\eqref{exp:energy_cost}. 
The arena is both windy and muddy, which is costly for the UAV and UGVs, respectively.
(Left) \cd{} performs poorly due to its fixed planning order: the UAV plans first to hover near the targets on the left, rather than venturing over the mud. Thus, the UGVs are under-utilized because they are unwilling to go into the mud to observe the targets on the bottom right. For similar reasons, \cd{} with reversed order under-utilizes the UAV, which is not visualized due to limited space. (Right) In contrast, \dls{} deploys the UAV over the muddy regions, leading to a better value of $J(\mathcal{S})$ in \eqref{eq:sensing_and_energy_objective}.}
\label{fig:hetero_compare}
\end{figure}

Now consider a heterogeneous team with 2 UGVs and 1 UAV with different sensing and control profiles (Table~\ref{tab:robot_setup}) tracking $10$ static targets in a $100\text{ m}\times100\text{ m}$ arena over a longer horizon $T=20$ (see Fig.~\ref{fig:hetero_compare}). The UAV has better sensing range and field of view compared to UGVs, but consumes more energy.
The arena has overlapping muddy and windy regions, so robots must collaboratively decide which should venture into the costly regions.
We set $m_i$ with a common value for every robot $i$ and increase it from $0$ to $0.5$, where each configuration is repeated for $50$ trials. Robots are spawned in the non-muddy, non-windy region, but targets may appear anywhere. We set $\delta=4$ to handle the longer horizon, and evaluate two \cd{} planning orders: from UAV to UGVs, and the reverse.

As shown in Fig.~\ref{fig:hetero_tradeoff}, \dls{} consistently achieves better sensing and energy trade-off than \cd{} on average. To gain intuition, a trial where \cd{} performs poorly is shown in Fig.~\ref{fig:hetero_compare}. Due to the non-monotone objective, the robot who plans first to maximize its own objective can hinder robots who plan later, thus negatively affecting team performance.

\subsection{Hardware-In-The-Loop Benchmark}\label{sec:hil}

\begin{figure} [t!]
\centering
\includegraphics[width=0.9\columnwidth, trim={0.5cm 1.1cm 1cm 1cm},clip]{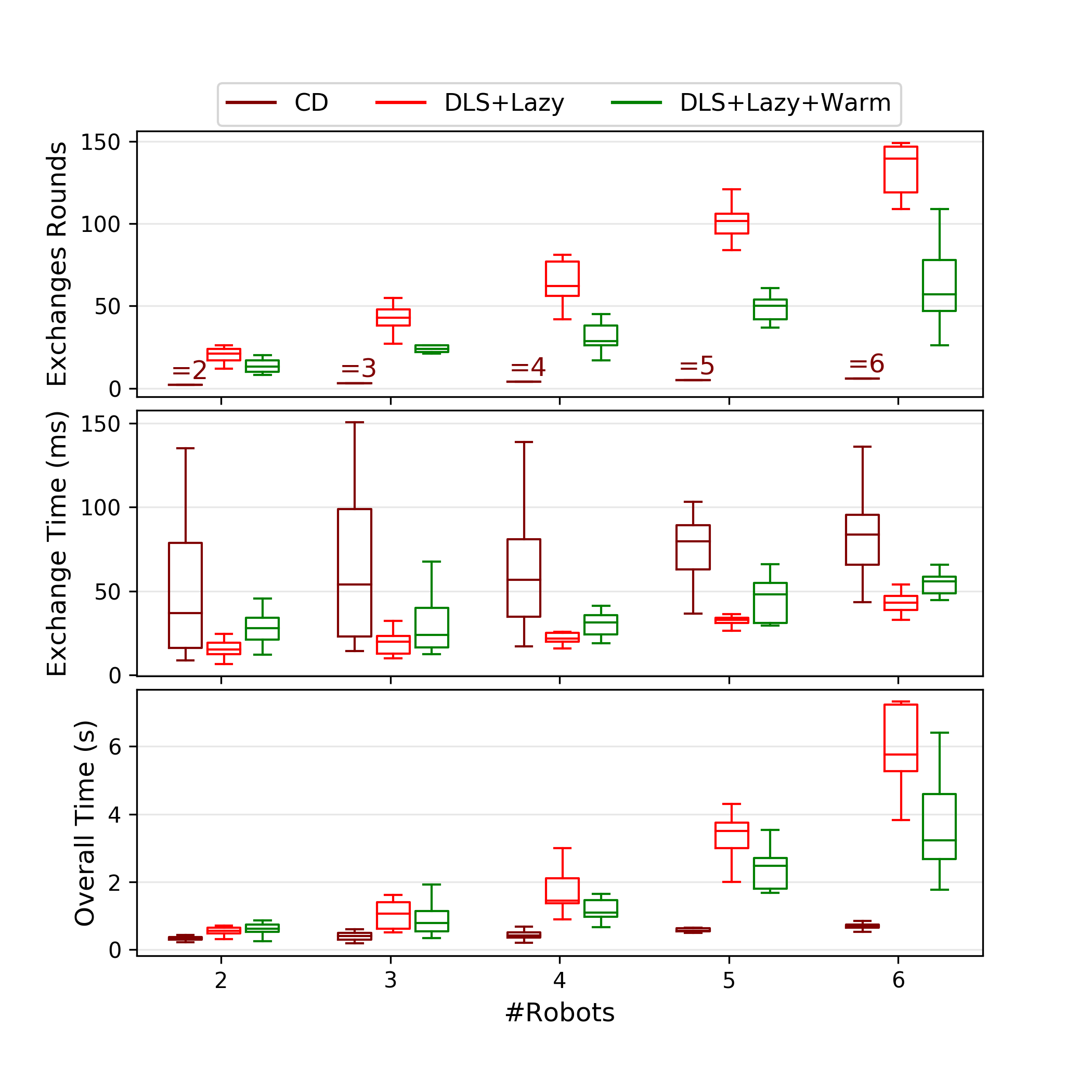}
\caption{Hardware-in-the-loop benchmark on $2$--$6$ synchronized computers (robots), over a ROS communication network with delays. Communication measured in total exchange rounds (top), time per exchange round (middle) and the overall time (bottom). The number of exchange rounds significantly affects the overall planning time in the distributed setting where delays necessitate the use of synchronization with higher overhead. With the proposed greedy warm start strategy, \dls{} planning time can be effectively reduced with fewer exchange rounds. Note that \cd{} is expected to perform the best as it is sequential in nature, but it does not have a worst-case performance guarantee for the non-monotone objective.}
\label{fig:hil_analysis}
\end{figure}

In order to analyze the communication and computation requirements of \dls{} in a real-world distributed setup, we conduct a hardware-in-the-loop benchmark on $2$--$6$ synchronized computers with Intel Core i7 CPUs over a ROS communication network, where every computer acts as a robot and runs a distributed planner. 
This setup is challenging due to the presence of communication delay which increases with the number of robots in the network (about $5~\text{ms}$ when $6$ robots are present), thereby violating the assumption of instantaneous communication required for \dls{}. 
As a result, many robots may broadcast valid proposals to the team, thus requiring a resolution scheme to make sure a common proposal is selected. A simple strategy is to require every robot to receive proposals (\textbf{Delete}, \textbf{Add}, \textbf{Swap} or $\nop$) from the entire team before picking the best proposal, with ties broken in the same fashion (e.g., always favoring the robot with lower index).

As the focus is on communication and computation, we consider a simple scenario of static target tracking with a team of UAVs that are equipped with the same range sensors and require the same energy expenditure. We set $9$ static targets up in a fixed-size arena, and increase the number of robots from $2$ to $6$ with randomized initial conditions.
We focus on the effect of greedy warm start on $\dls{}$ (with lazy method turned on), and also include results for $\cd{}$ for comparison, with $10$ Monte Carlo trials for every method and every number of robots. We use the anytime version of reduced value iteration~\cite{schlotfeldt2018anytime} for ground set generation with better real time performance, which continuously refines its solution by decreasing parameters $\sigma, \epsilon$ within allocated time. Specifically, we use time allocation of $0.1~\text{s}$ (included in the computation time measurement) and initial parameters $\epsilon=8$ with decrements of $0.5$ and $\delta=2.5$ with decrements of $0.05$. The benchmark results are visualized in Fig.~\ref{fig:hil_analysis} which shows the communication requirement (exchange rounds), time spent in each exchange round and the overall planning time. 

The key takeaway is that greedy warm start reduces both the communication and computation when synchronization is required among robots due to communication delays, in contrast to the simulation results in Fig.~\ref{fig:dls_analysis} where greedy warm start does not reduce computation. Therefore, it is necessary to use both lazy and greedy warm start to make $\dls{}$ suitable for real-world multi-robot applications. Note that \cd{} is expected to have the least communication and computation due to its sequential nature, but it does not have a worst-case performance guarantee and achieves worse information and energy trade-offs as shown in Sec.~\ref{sec:exp_multi_dynamic_targets} and Sec.~\ref{sec:exp_hetero}.

\section{Analysis of Control Algorithm}\label{sec:control_results}

The proposed control method in Sec.~\ref{sec:weighted_cbf_qp} is analyzed in simulation where robots with double integrator dynamics have to achieve specified goal states within specified time in 3D. The performances of the centralized and decentralized versions are compared based on final position errors, control efforts (the integral of squared acceleration) and computational time. Note that the baseline of our analysis is the uniform-weight formulation when $\beta=0$ such that $W(\beta)$ is an identity matrix which does not account for timely arrivals. The algorithm is implemented in Python with the CVXOPT~\cite{cvxopt} library on a laptop with an Intel Core i7 CPU.

\begin{figure}[t!]
	\centering
	\begin{subfigure}[b]{0.55\columnwidth}
	    \centering
	   \includegraphics[width=\textwidth, trim={2.7cm 0.4cm 1.9cm 2cm},clip]{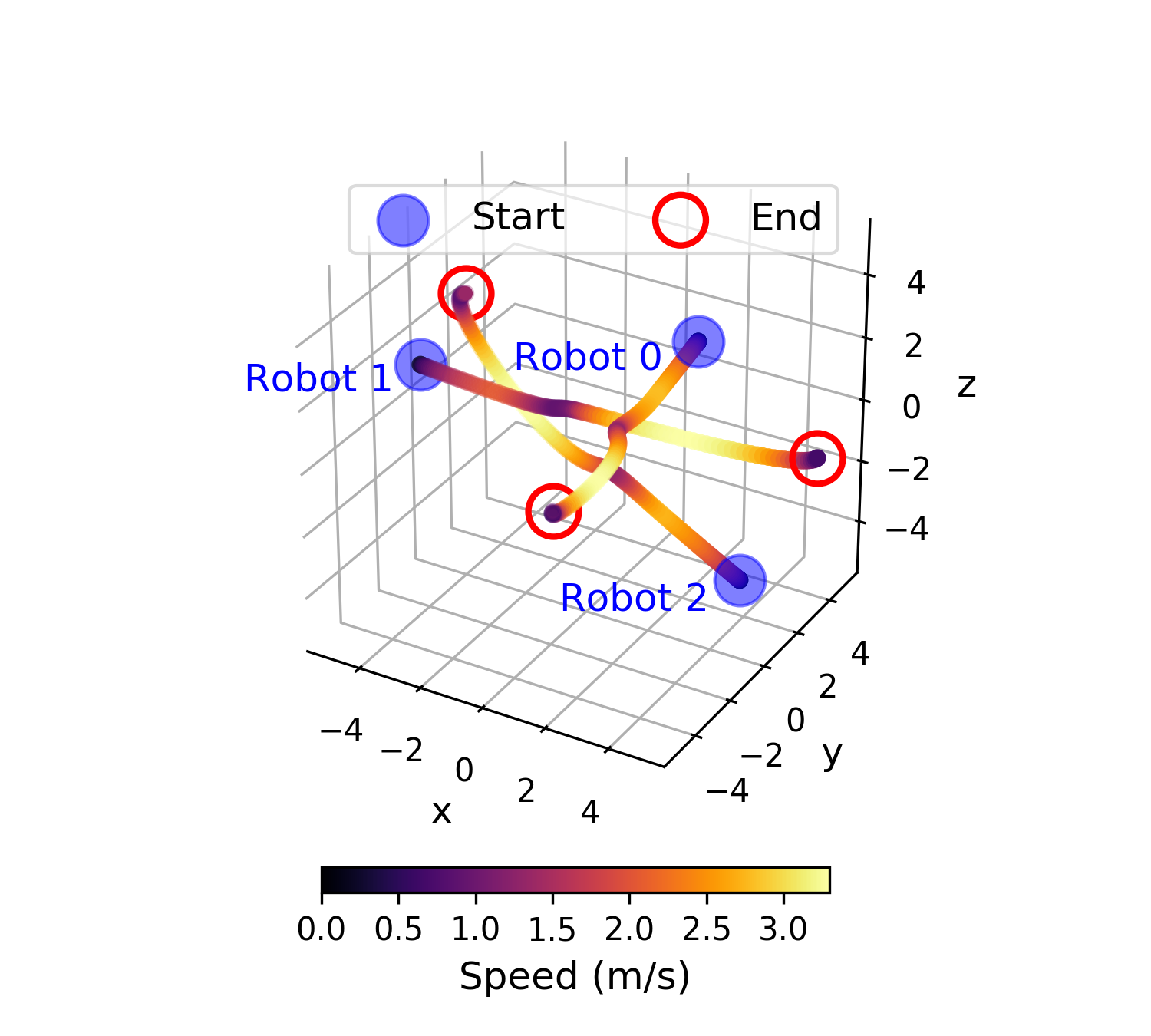}
	    \caption{}\label{fig:cbfqp_benchmark_example_scenario}
	\end{subfigure}
	\begin{subfigure}[b]{0.4\columnwidth}
	    \centering
	    \includegraphics[width=\textwidth, trim={0cm 0.4cm 0.4cm 0.3cm},clip]{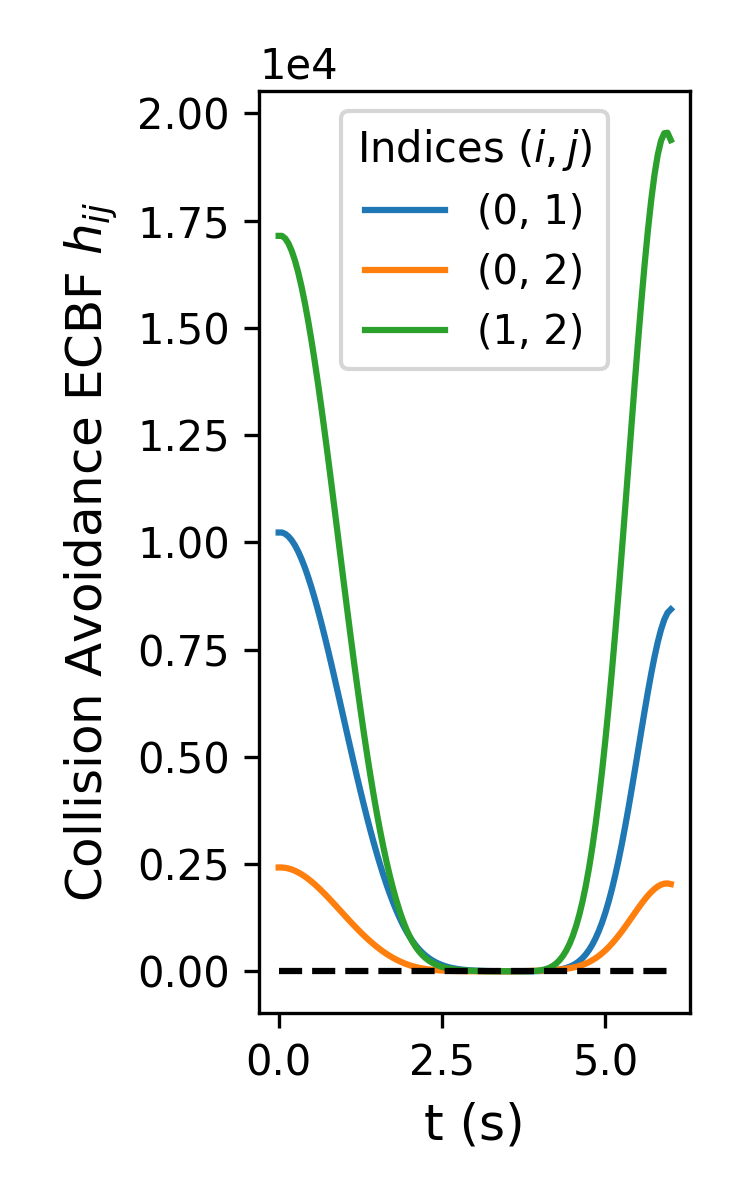}
	    \caption{}\label{fig:cbfqp_benchmark_example_hij}
	\end{subfigure}
	\caption{An example Monte Carlo trial consists of $3$ robots which are spawned on a sphere and try to reach terminal states on the opposite side of the sphere within $6$~s, while satisfying inter-robot collision avoidance constraints encoded by $h_{ij}\geq 0$~\eqref{eq:ecbf_collision_constraint} for robots $i, j\in\{0,1,2\}$ and $i\not= j$. The start and end configurations and the speed profiles are shown in~(\protect\subref{fig:cbfqp_benchmark_example_scenario}). The collision avoidance constraints are satisfied at all time as shown in (\protect\subref{fig:cbfqp_benchmark_example_hij}).}
	\label{fig:cbfqp_benchmark_example}
\end{figure}

\begin{figure}[t!]
	\centering
	\begin{subfigure}[b]{\columnwidth}
	    \centering
	   \includegraphics[width=\textwidth, trim={0.8cm 0cm 0.8cm 0.2cm},clip]{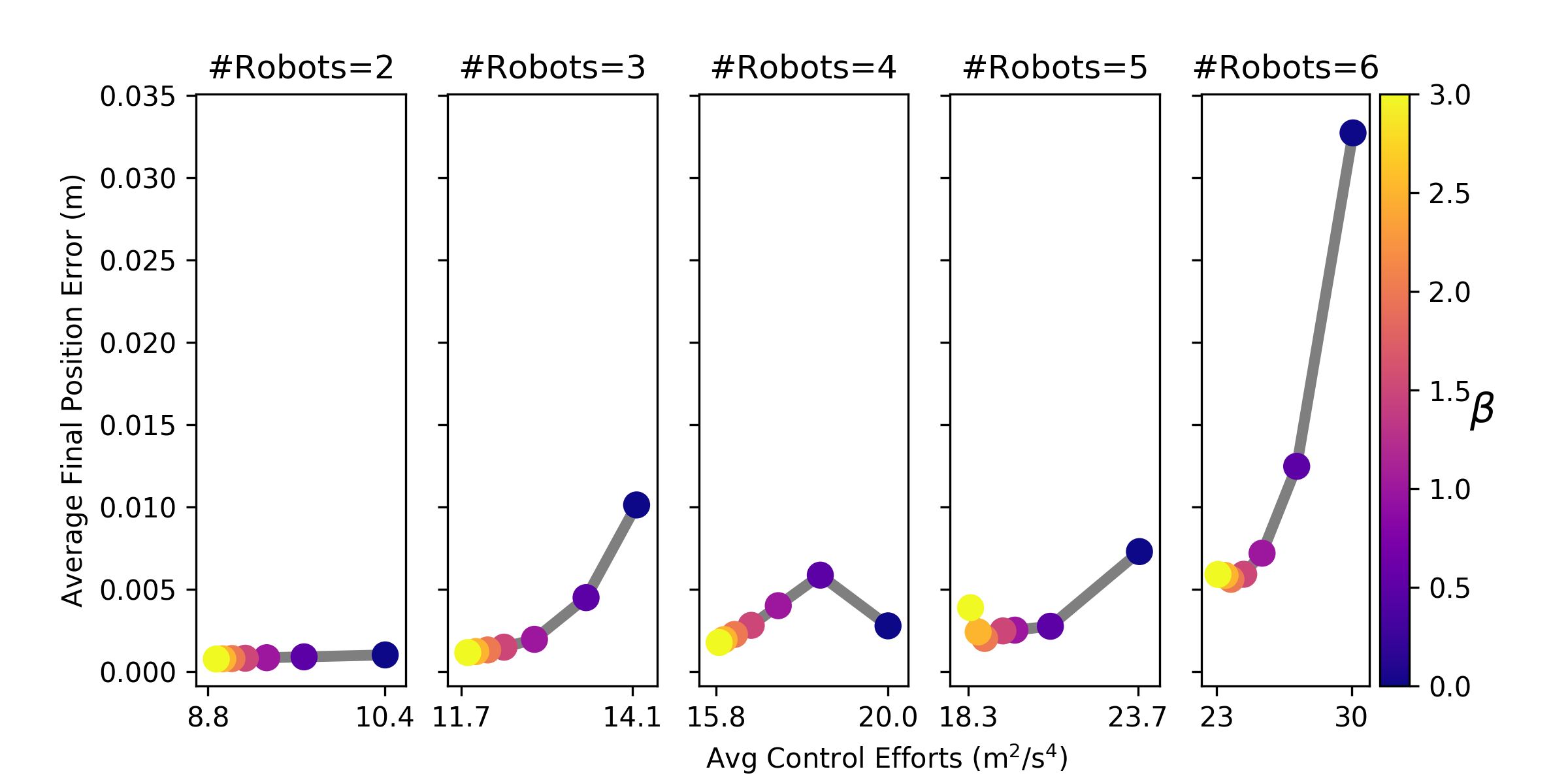}
	    \caption{\scriptsize Centralized CBF-QP with weighted norm.}
	    \label{fig:cent_cbfqp_benchmark}
	\end{subfigure}
	\begin{subfigure}[b]{\columnwidth}
	    \centering
	    \includegraphics[width=\textwidth, trim={0.8cm 0cm 0.8cm 0.2cm},clip]{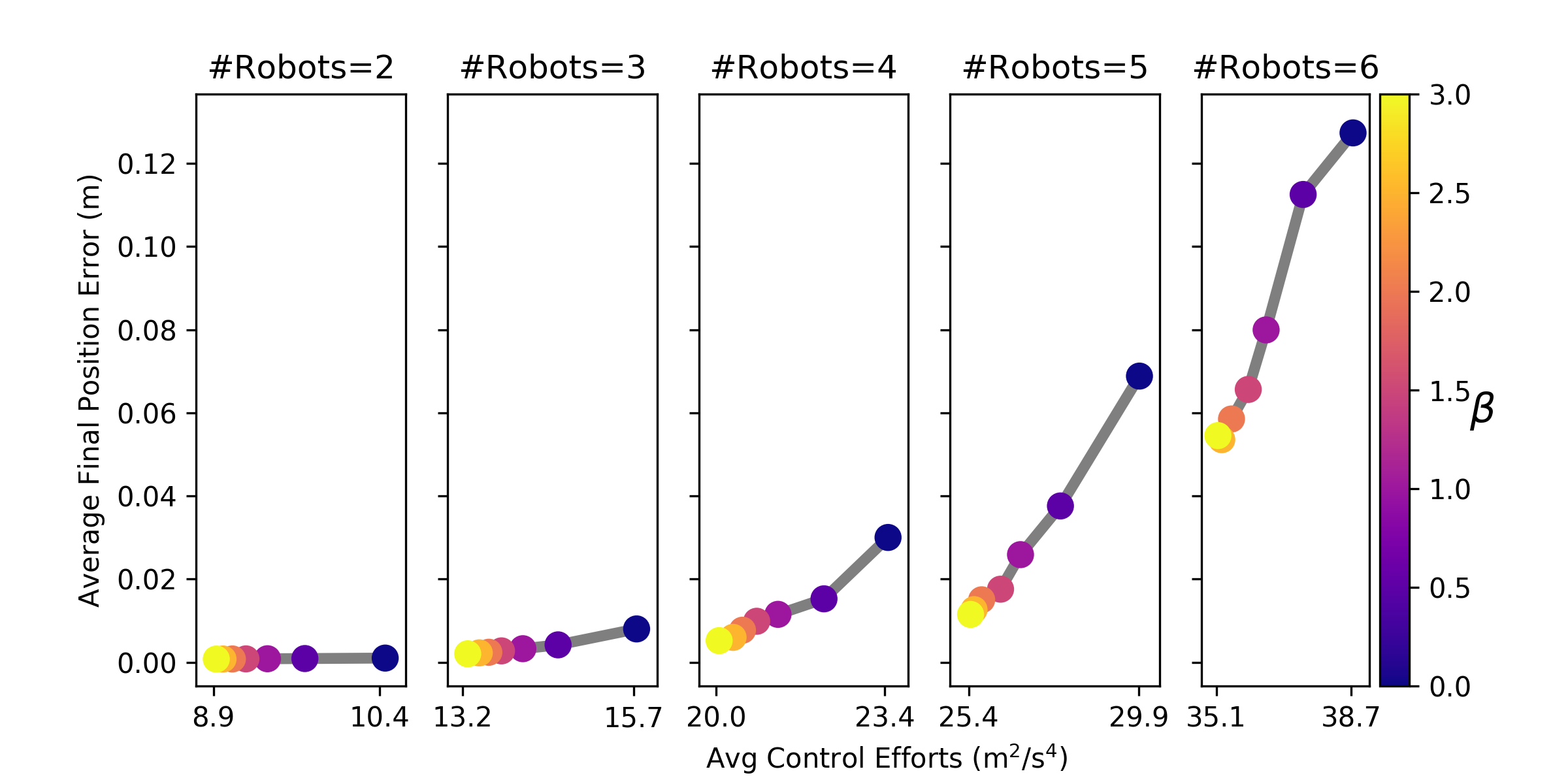}
	    \caption{\scriptsize Decentralized CBF-QP with weighted norm.}
	    \label{fig:dec_cbfqp_benchmark}
	\end{subfigure}
	\caption{Effect of $\beta$ in~\ref{eq:weighted_cbf_qp} that penalizes mission rate deviation in the centralized and decentralized CBF-QP, where each data point is an average over $50$ Monte Carlo trials. The vertical axis shows the average final position error per robot, and the horizontal axis shows the average control effort per robot (integral of squared acceleration). %
    In general, greater $\beta$ decreases both the average control effort and the final position error for both the centralized and decentralized coordination, because the robots spend less time avoiding collisions which allow more time to navigate to the goals with smaller control inputs. 
    Note that this trend does not hold smoothly for some centralized cases, where the terminal errors are already small and sensitive to oscillations due to the collision avoidance constraints. Overall, the decentralized coordination is more conservative and produces higher final position error with higher control effort because each robot has a smaller admissible control space. However, the computational benefit of decentralized coordination (see Fig.~\ref{fig:cbfqp_time}) makes it attractive for real-world multi-robot experiments.}
    \label{fig:cbfqp_benchmark_weighted_norm}
\end{figure}

\begin{figure} [t!]
\centering
\includegraphics[width=0.9\columnwidth, trim={0.4cm 0.5cm 0.4cm 0.5cm}, clip]{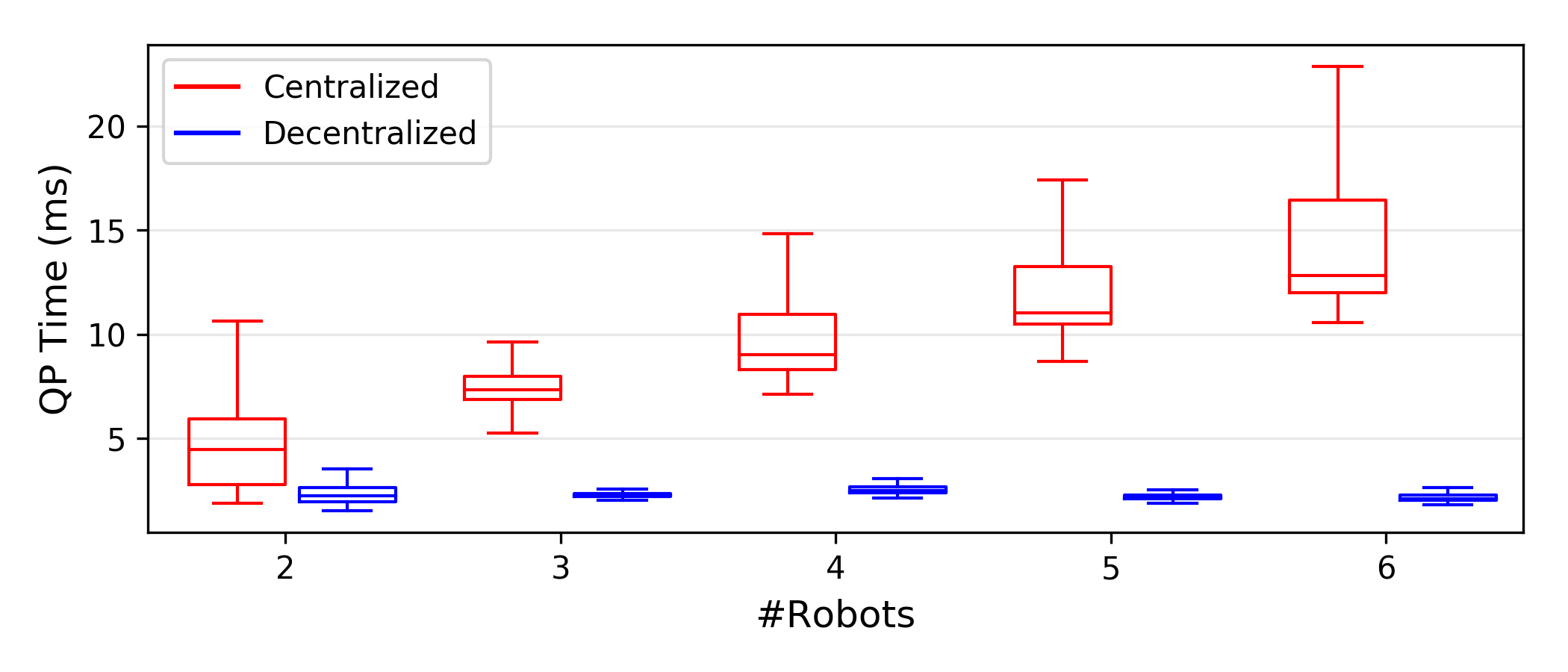}
\caption{Computation time for solving the centralized and the decentralized CBF-QPs.
The number of inter-robot collision avoidance constraints scales linear in the number of robots in the decentralized case (blue) but quadratically in the centralized version (red). }
\label{fig:cbfqp_time}
\end{figure}

\begin{figure*}[t!]
\centering
\includegraphics[width=0.9\textwidth, trim={0cm 0cm 0cm 0cm}, clip]{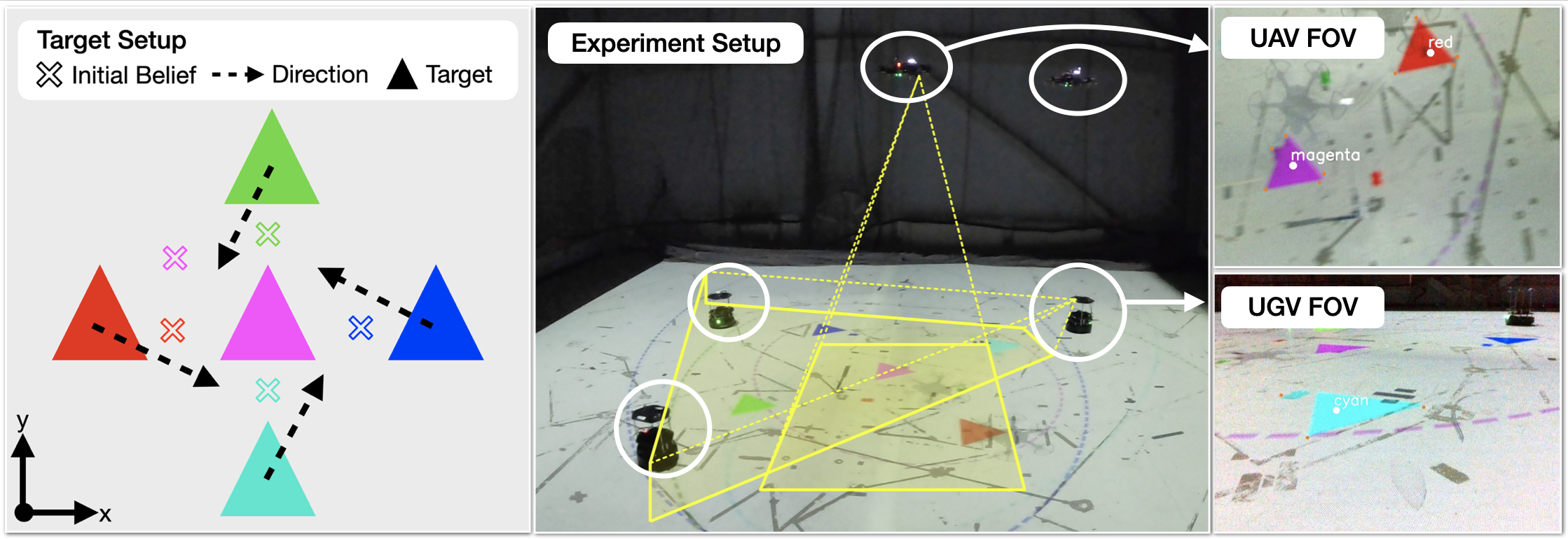}
\caption{The hardware experiments involve projected dynamic targets on the floor (triangles with different colors), which are observed by robots via onboard camera. 
(Left) Targets follow predetermined linear motions except for the magenta target that stays stationary. The ground truth target states are unknown to the robots who only know rough initial locations marked by crosses.
(Middle) The cameras on UAVs and UGVs are downward-facing and forward-facing respectively.
(Right) Targets detections and data associations are based on colors, but the pre-existing markings on the floor and shadows from UAVs make the detection tasks challenging and prone to erroneous detections.}
\label{fig:hw_exp_setup}
\end{figure*}

Robots are randomly spawned on a 3D sphere with radius of $6$~m and are tasked to navigate to the opposite side of the sphere in $6$~s. The safety radius is $D_s=0.5$~m and the z-scale factor is $c=1$. The ECBF parameter $K_{\eta}=[25.5, 10.1]$ is obtained by setting the poles of the closed-loop double integrator system as $[-5, -5.1]$ (see Def.~\ref{def:ECBF}). Furthermore, we impose additional affine constraints in the optimization such that control inputs in acceleration are less than $10~\text{m/s}^2$ in every axis. The specified initial and final conditions of the robots are stationary but we add noise (both in position and velocity) in order to break the symmetry of the problem and reduce the likelihood of deadlocks. Given the number of robots between $2$ to $6$, we also vary the CBF-QP weight $\beta$ between $0$ and $3$ and repeat each configuration for $50$ trials. An example trial with $3$ robots is visualized in Fig.~\ref{fig:cbfqp_benchmark_example} which shows the executed trajectories and safety of the robot via the non-negative barrier function values associated with inter-robot collision avoidance. Note that safety of the robots is satisfied for all trials. However, the decentralized scenario with many more robots and smaller space may lead to infeasibility of the optimization problem, but deadlocks can be ameliorated with higher-level discrete-time planning (energy-aware trajectories are typically spread-out in space) or directly addressed in the QP formulation (e.g., see~\cite{wang2017safetytro}).

The impact of the weight $\beta$ on the norm of the final position error and the control efforts is shown in Fig.~\ref{fig:cbfqp_benchmark_weighted_norm} for both the centralized and the decentralized CBF-QPs. Compared to the commonly used uniform-weight CBF-QP ($\beta=0$), increasing $\beta$ leads to lower average final position error and average control effort, because robots resolve collisions faster and can afford more time to navigate to goal with smaller control inputs. 
Compared to the centralized one, the decentralized controller leads to lower performance due to smaller admissible control space for each robot. However, without the all-to-all communication and potentially higher delays associated with the centralized version, the decentralized controller is attractive for real-world applications by requiring only one-hop communication and lower computational overhead (see Fig.~\ref{fig:cbfqp_time}).

\section{Hardware Experiments---Dynamic Target Tracking}\label{sec:hardware_experiments}

\begin{figure*}[t!]
\centering
\includegraphics[width=0.9\textwidth, trim={0cm 0cm 0cm 0cm}, clip]{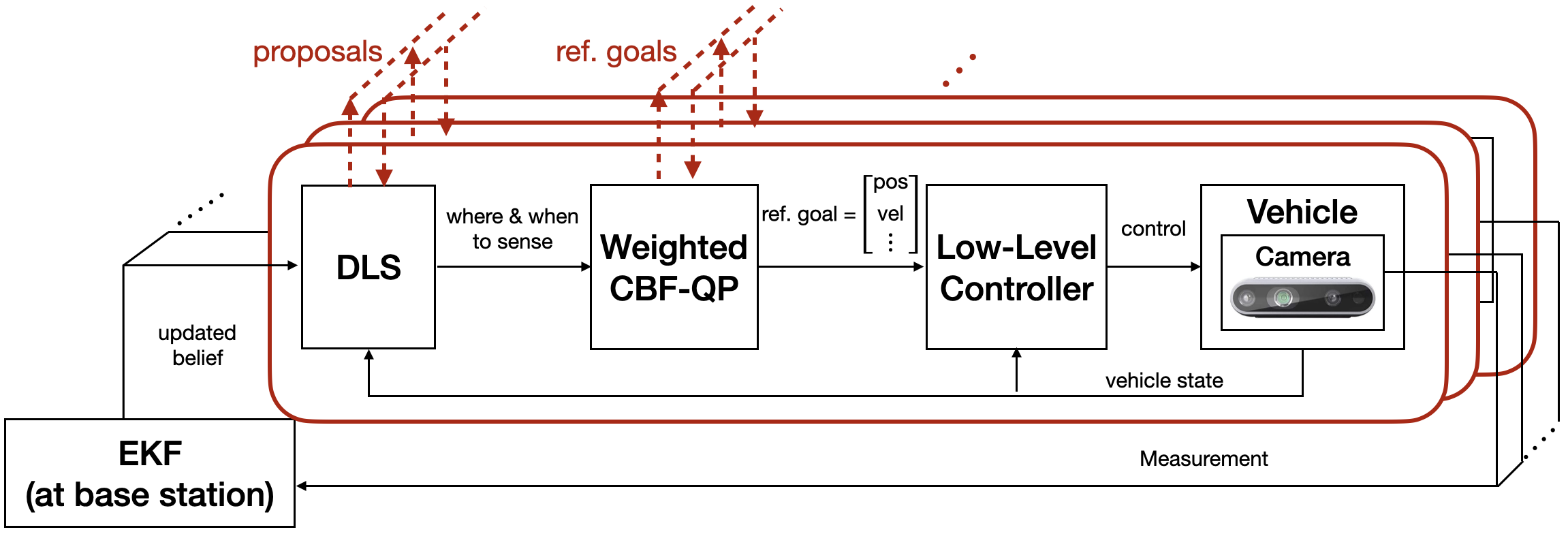}
\caption{Overall information gathering architecture that includes onboard planning, control and sensing (shown in red blocks). For convenience, measurements are fused at the base station for visualization, but decentralized estimators can be readily substituted and are not the focus of this work. The robots exchange proposals for planning high-level waypoints using \dls{} or \cd{}, which are tracked by the downstream safety controller based on~\ref{eq:weighted_cbf_qp}. The safety controller requires reference goals (e.g., position and velocity for double integrators) from other robots to achieve collision avoidance. Lastly, low-level controllers stabilize the robots around the reference goals.}
\label{fig:hw_architecture}
\end{figure*}

The previous sections provide analysis on the planning and control strategies individually in simulation or hardware-in-the-loop tests. It is also crucial to ensure the overall approach is feasible and produces good performance under real-world factors such as occlusion, collision avoidance, limited onboard computation, and extended mission horizon that necessitates re-planning. To this end, the proposed approach is tested on hardware for a dynamic target tracking task, where a group of ground and aerial robots track the states of moving targets with onboard sensors while ensuring safety. 
Next, details about the target simulation, robot characteristics and overall approach are discussed in Sec.~\ref{sec:hw_exp_setup}. Configurations for the planner and controller are discussed in Sec.~\ref{sec:hw_planning_and_control_setup}. Lastly, the performance and feasibility of the proposed approach are analyzed in Sec.~\ref{sec:hw_performance_analysis}.

\subsection{Target Model, Observation Model, and Hardware Setup}\label{sec:hw_exp_setup}
An overview of the target models, experiment setup, and detection techniques are summarized in Fig.~\ref{fig:hw_exp_setup}. To simulate dynamic targets, an overhead projection system visualizes moving shapes with different colors with known fixed sizes. For simplicity, four targets follow pre-determined linear trajectories with constant velocity of $0.15$~{m/s} across the floor, and one target remains stationary in the center over the span of 70~s. Robots are only given a rough position estimations at the beginning of the mission with large uncertainty. The targets are modeled as double integrator and the robots need to reduce uncertainty about the target position and velocities. 

A fleet of three UGVs and two UAVs are equipped with computers with Intel Core i7 processor and RGB cameras to detect targets based on colors. The ground truth poses of the robots are provided by a Vicon tracking system. Given a camera pose and a pixel coordinate that represents target position in the image space, a unique position in the world frame can be deduced since the targets are restricted to the floor. To make the problem more difficult and encourage more movements, we only use \textit{range measurements} instead of position measurements directly. Note that the detection task is challenging due to existing markings on the floor and shadows from the UAVs that can fragment a color patch. To resolve this issue, the convex hull of all pixels of a given target color is computed and prior knowledge about target sizes and room bounds is used to reduce erroneous detections. The robots process the images asynchronously and measurements are sent to base station where the centralized filter runs at 3~Hz.

\subsection{Planner and Controller Configurations}\label{sec:hw_planning_and_control_setup}

A schematic for the overall planning and control architecture is shown in Fig.~\ref{fig:hw_architecture}, where the robots have to exchange proposals for planning informative trajectories in a distributed fashion and exchange reference goals (e.g., position, velocities for double integrators) in order to avoid collisions in a decentralized fashion. 
For simplicity, target estimation is achieved via a centralized filter at the base station to ensure that robots share the same target belief, but distributed estimation methods such as the distributed Kalman filter can be used instead (e.g., see~\cite{schlotfeldt2018anytime} for a concrete example and a discussion on communication requirements and convergence guarantees).

During planning, UAVs are restricted to $3$~m height and UGVs are on the floor, and we use the same differential-drive kinematic model
for UAVs and UGVs to generate high-level waypoints for simplicity, with sampling period $\tau=3$~s. The motion primitives for UGVs are $\{u=(\nu, \omega) \mid \nu \in \{0.3,0.6\}~\text{m/s},\ \omega\in\{0,\pm0.2, \pm 0.5\}\text{ rad/s}\}$, and the ones for UAVs are more aggressive: $\{u=(\nu, \omega) \mid \nu \in \{0.3,0.5, 0.8\}~\text{m/s},\ \omega\in\{0,\pm0.35, \pm 0.5, \pm0.75\}\text{ rad/s}\}$. The range measurement noise standard deviations are set to $0.4\text{ m}$ for UAVs and $1.0\text{ m}$ for UGVs based on empirical evaluations.
For ground set generation, anytime reduced value iteration~\cite{schlotfeldt2018anytime} is used to plan trajectories with horizon $T=8$ with time allocation of $0.3~\text{s}$, initial parameters $\epsilon=5$ with decrements of $0.5$ and $\delta=1.5$ with decrements of $0.05$. To limit the overall planning time, we limit the ground set of every robot to be fewer than 800 trajectories.
When used in the planning objective, the energy cost for each generated trajectory is computed in closed form according to~\eqref{eq:optimal_lqr_objective} and weighted by $0.1$ to make the energy cost term comparable to the information gain.
To account for updated target beliefs and long mission time span, replanning is scheduled every two time steps (6~s) and planning occurs simultaneously while the robots execute plans from previous rounds.
Lastly, both lazy evaluation and greedy warm start are used during \dls{}.

\begin{figure*} [t!]
\centering
\includegraphics[width=\textwidth, trim={0cm 0cm 0cm 0cm}, clip]{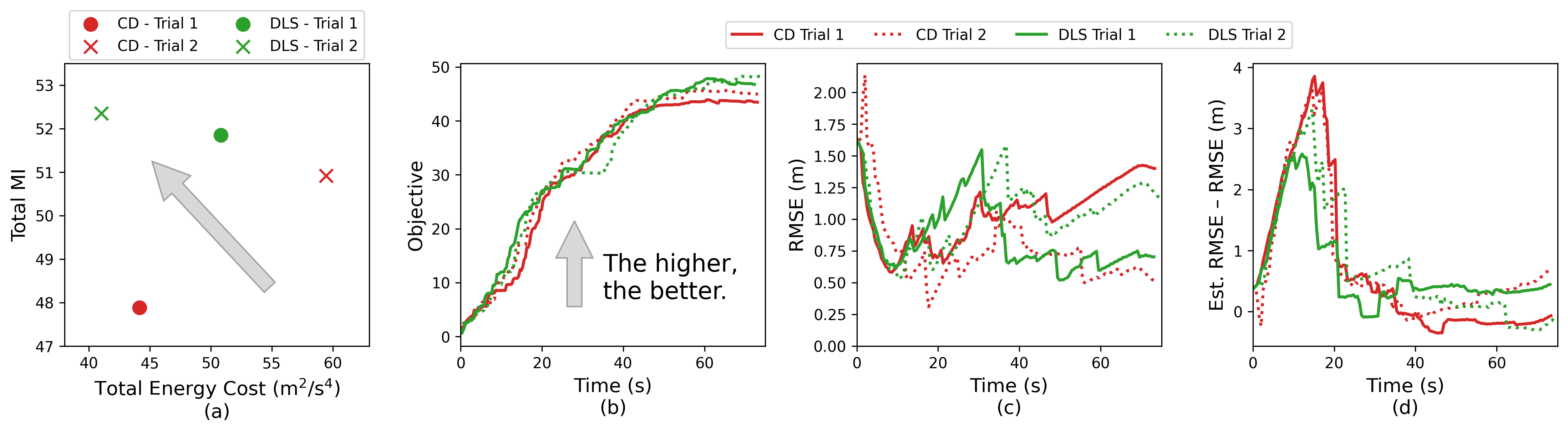}
\caption{Performance analysis of the hardware experiments. (a) Trade-off between sensing performance (measured by mutual information (MI)) and energy cost using \dls{} and \cd{}, where it is better to be in the upper left indicated by the gray arrow. 
(b) The trade-off objective being maximized. \dls{} outperforms \cd{} by achieving better objective values.
(c) The root mean square errors (RMSEs) of target positions over time. The initial decrease and the subsequent increase in tracking performance are expected due to the change in mission difficulty.
(d) The differences between the filter-estimated RMSEs and the ground truth RMSEs. The values stay bounded near $0$ towards the end of the mission, indicating that the filter correctly reports high uncertainty as tracking performance degrades.
}
\label{fig:hw_tradeoff_obj_rmse}
\end{figure*}

For the controller, we use double integrator models for both the UAVs and UGVs, because multi-rotors and differential drive robots are both differentially flat
and move at low speed. The integrator states (position and velocities) will be tracked by lower-level controllers (position controller for UAVs~\cite{lee2010geometric} and Lyapunov-based pose controller for UGVs). The inter-UGV safety distance is $1.0$~m and the one for UAVs is $1.5$~m. We further restrict robots within the arena, limit their velocities and accelerations (control input) via ECBF. For the decentralized ECBF constraint~\eqref{eq:ecbf_bij}, poles of $[-3.0, -3.1]$ are used to generate the $K_{\eta}=[9.3, 6.1]$ vector, and any pair of robots $i$ and $j$ share equal responsibility for avoiding collisions (i.e., $\alpha_i=\alpha_j>0$). The weight matrix $R$ in the LQR energy objective~\eqref{eq:lqr} is set to identity. To ensure timely arrival at desired sensing configurations without overly aggressive maneuvers, we set $\beta=0.5$ for~\ref{eq:weighted_cbf_qp} based on empirical evaluations.

\subsection{Performance Analysis}\label{sec:hw_performance_analysis}

\begin{figure} [t!]
\centering
\includegraphics[width=\columnwidth, trim={0cm 0cm 0cm 0cm}, clip]{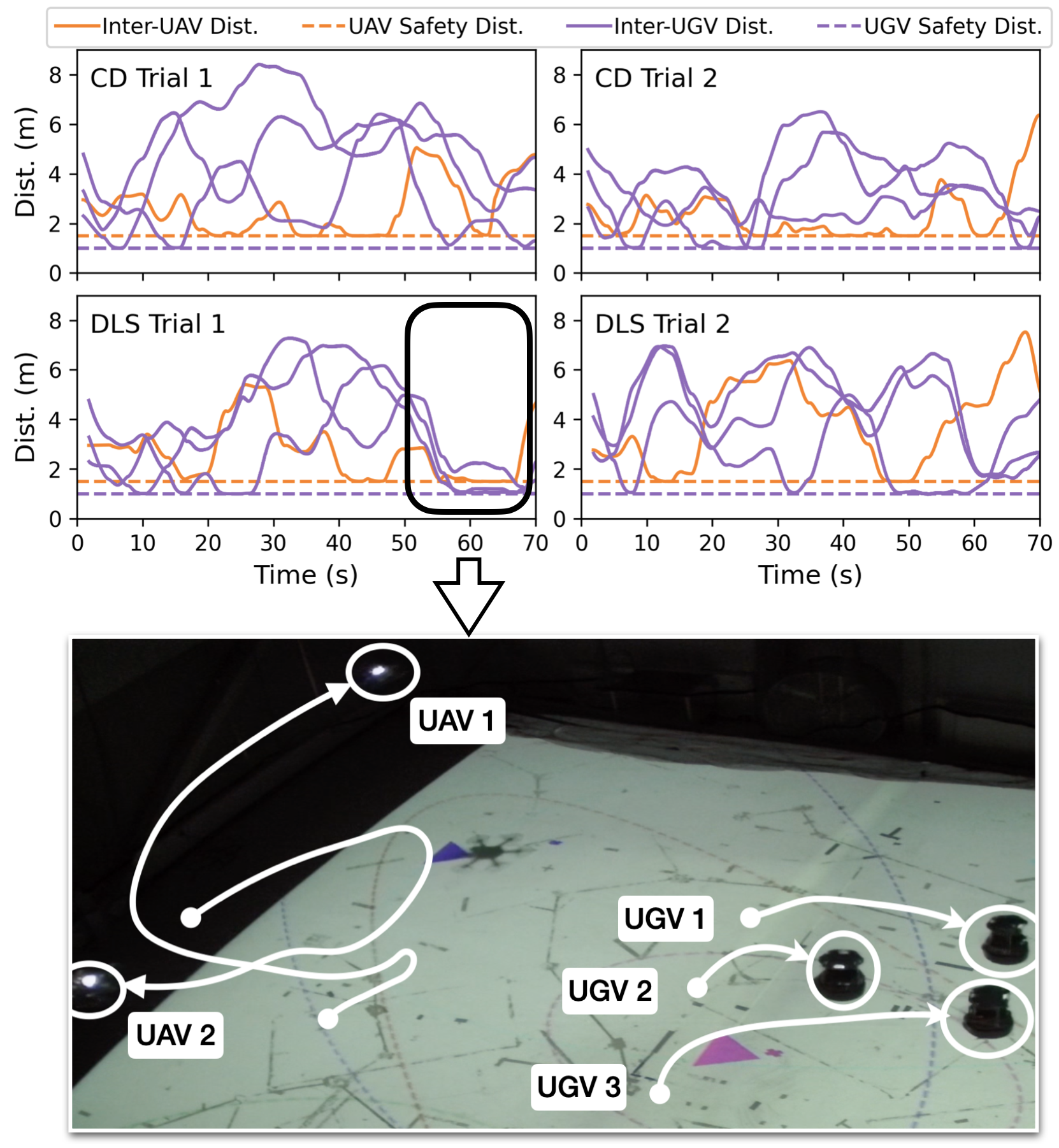}
\caption{Collision avoidance constraints are satisfied at all time in all hardware trials, where the inter-UAV and inter-UGV distances never fall below the safety distances (top 4 figures). A particular moment in Trial 1 with \dls{} is shown at the bottom with robot trajectories overlaid on top, demonstrating safe decentralized navigation.}
\label{fig:hw_collision_avoidance}
\end{figure}

\begin{table}[t]
\caption{Computation and communication of proposed approach.}\label{tab:hw_comp_comm_table}
\begin{tabular}{lllllll}
\hline
\multirow{2}{*}{} & \multicolumn{2}{l}{Planning Time (s)} & \multicolumn{2}{l}{\#Exchanges} & \multirow{2}{*}{\begin{tabular}[c]{@{}l@{}}Exchange\\ Delays (ms)\end{tabular}} & \multirow{2}{*}{\begin{tabular}[c]{@{}l@{}}Control\\ Time (ms)\end{tabular}} \\ \cline{2-5}
                  & \cd{}                & \dls{}               & \cd{}           & \dls{}              &                                                                                 &                                                                              \\ \hline
mean              & 0.995             & 1.656             & 5            & 17.296           & 6.696                                                                           & 2.650                                                                        \\ \hline
std               & 0.092             & 0.226             & 0            & 3.400            & 2.145                                                                           & 1.082   \\ \hline
\end{tabular}
\end{table}

The performances of \dls{} and \cd{} are measured in two representative trials with different initial robot positions, and the computation and communication requirements of the overall approach are summarized in Table~\ref{tab:hw_comp_comm_table}. Even with an average communication delay about 7~ms, the robots are able to reliably plan and remain safe. 
Note that the average time to solve for the decentralized weighted CBF-QP still remains tractable and well under 5~ms, and the inter-robot distances always remain above safety threshold for all hardware trials, as visualized in Fig.~\ref{fig:hw_collision_avoidance}.

Although \dls{} requires more computation and communication than \cd{}, it provides a worst-case performance guarantee while optimizing the trade-off between sensing and energy cost. Consistent with the simulation results in Fig.~\ref{fig:hetero_tradeoff}, \dls{} achieves better trade-off between information gain and energy cost than the trade-off achieved by \cd{} after averaging the results from 2 trials, as shown in Fig.~\ref{fig:hw_tradeoff_obj_rmse}a. In addition, Fig.~\ref{fig:hw_tradeoff_obj_rmse}b shows that \dls{} achieves a better objective value (weighted sum of information gain and energy cost) than \cd{}, which is consistent with the simulation results in Fig.~\ref{fig:dls_obj_control_sensing_time}.

To gain insight into the actual tracking performance achieved by the robots, we plot the root mean square errors (RMSEs) of the estimated target positions in Fig.~\ref{fig:hw_tradeoff_obj_rmse}c. Because the mission starts easy as targets gather in the center initially and becomes more difficult as they move towards the boundary, the tracking performance first improves and subsequently degrades as expected. 
Other than the accuracy of the estimated target positions, we also analyze whether the filter predicts higher uncertainty when the tracking performance degrades. To this end, Fig.~\ref{fig:hw_tradeoff_obj_rmse}d plots the difference between filter-estimated RMSEs (computed as the root mean of the trace of the covariance matrices for the target positions) and the true RMSEs. The fact that the differences eventually stay bounded near $0$ indicates that the filter correctly reports high uncertainty when the tracking task becomes more challenging.

Overall, the proposed hierarchical approach that separates planning and control allows both \dls{} and \cd{} to achieve reasonable target tracking performance in hardware with noisy perception systems while satisfying collision-avoidance constraints. Moreover, the hardware trials have demonstrated that the proposed planner \dls{} is a feasible strategy for real-world applications that can achieve better trade-off between information gain and energy cost compared to \cd{}, which does not have any worst-case performance guarantee when optimizing the trade-off objective.

\section{Conclusion}\label{sec:conclusion}
This work proposed a non-monotone information gathering method which consists of a distributed planner that optimizes the trade-off between sensing performance and energy expenditure and a decentralized controller that ensures safety and encourages timely arrival at designated sensing configurations. The proposed approach was analyzed in parts and as a whole via simulations, hardware-in-the-loop benchmarks, and a real-world dynamic target tracking mission, while outperforming the state-of-the-art coordinate descent method. Although feasible in practice, \dls{} still requires all-to-all communication and its computation scales poorly with large number of robots. An interesting future direction is to reduce the complexity of \dls{} by exploiting spatial separation and analyze the impact on the performance guarantee.
Another promising direction is to design approximation algorithms that ensure collision avoidance during planning while still leveraging the submodular property of the objective, which may further improve the overall performance compared to the proposed method that avoids collisions exclusively via the low-level controller.
Lastly, this work assumes that the robots know each other's states perfectly but this strong assumption may not hold in practice. Therefore, it is important to handle imperfect state estimation and analyze its impact on the performance and safety guarantees of this work.

\appendices
\section{Proof of Prop.~\ref{prop:ecbf_affine_term}}\label{appendix:ecbf_affine_term_proof}
Recall that $u_i=p_i^{(r)}$ and $p_i=[ \ctnsax{p}{i}{x}, \ctnsax{p}{i}{y}, \ctnsax{p}{i}{z}]\tr$, and we denote $\Delta_u=u_i-u_j=[\Delta_x^{(r)}, \Delta_y^{(r)}, c\Delta_z^{(r)}]\tr$.
For the base case $r=1$, it can be shown that 
\begin{align}
\dot h_{ij} &= 4(\Delta_x^2+\Delta_y^2)(\Delta_x\dot\Delta_x+\Delta_y\dot\Delta_y) + 4\Delta_z^3\dot\Delta_z\\
&= 4\left[(\Delta_x^2+\Delta_y^2)\Delta_x, (\Delta_x^2+\Delta_y^2)\Delta_y, \Delta_z^3/c\right] \cdot \Delta_u,
\end{align}
where $L_fh_{ij}(x_i,x_j)=0$. 

Next, assume that Prop.~\ref{prop:ecbf_affine_term} holds for $r=\tilde r > 1$, \ie,
\begin{align}
h_{ij}^{(\tilde r)} &=A_{ij} \Delta_u + L_f^{\tilde r}h_{ij} \\
&=4(\Delta_x^2+\Delta_y^2)(\Delta_x\Delta_x^{(\tilde r)}+\Delta_y\Delta_y^{(\tilde r)}) + 4\Delta_z^3\Delta_z^{(\tilde r)} + L_f^{\tilde r}h_{ij}.
\end{align}
Taking the $(\tilde r+1)$-th derivative of $h_{ij}$, we get
\begin{align}
& h_{ij}^{(\tilde r + 1)} = 4(2\Delta_x\dot\Delta_x+2\Delta_y\dot\Delta_y)(\Delta_x\Delta_x^{(\tilde r)}+\Delta_y\Delta_y^{(\tilde r)}) \nonumber\\
& +4(\Delta_x^2+\Delta_y^2)(\dot\Delta_x\Delta_x^{(\tilde r)}+\Delta_x\Delta_x^{(\tilde r+1)} +\dot\Delta_y\Delta_y^{(\tilde r)}+\Delta_y\Delta_y^{(\tilde r+1)}) \nonumber\\
& + 12\Delta_z^2\dot\Delta_z\Delta_z^{(\tilde r)} + 4 \Delta_z^{3}\Delta_z^{(\tilde r+1)} + \frac{d}{dt}\left(L_f^{\tilde r}h_{ij}\right)\\
& =  4(\Delta_x^2+\Delta_y^2)(\Delta_x\Delta_x^{(\tilde r+1)}+\Delta_y\Delta_y^{(\tilde r+1)})+ 4\Delta_z^3\Delta_z^{(\tilde r+1)} \nonumber\\
&+ L_f^{\tilde r + 1}h_{ij} \nonumber \\
& = \underbrace{A_{ij}\left(p_i^{(\tilde r +1)}-p_j^{(\tilde r +1)}\right)}_{=A_{ij}(u_i-u_j)} + L_f^{\tilde r + 1}h_{ij},
\end{align}
where 
\begin{align}
L_f^{\tilde r + 1}h_{ij} = & 4(2\Delta_x\dot\Delta_x+2\Delta_y\dot\Delta_y)(\Delta_x\Delta_x^{(\tilde r)}+\Delta_y\Delta_y^{(\tilde r)}) \nonumber\\
& +4(\Delta_x^2+\Delta_y^2)(\dot\Delta_x\Delta_x^{(\tilde r)}+\dot\Delta_y\Delta_y^{(\tilde r)}) \nonumber\\
& + 12\Delta_z^2\dot\Delta_z\Delta_z^{(\tilde r)} + \frac{d}{dt}\left(L_f^{\tilde r}h_{ij}\right).
\end{align}

\bibliographystyle{IEEEtran}
\bibliography{bibs}

\begin{IEEEbiography}[{\includegraphics[width=1in, height=1.25in, clip,keepaspectratio, trim={3cm 2cm 3cm 2cm}]{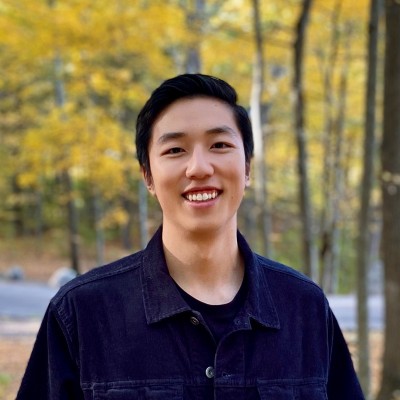}}]{Xiaoyi Cai}
received the B.S. and M.S. degrees in
electrical and computer engineering from the Georgia Institute of Technology, Atlanta, GA, USA, in
2017 and 2019, respectively. He is currently working towards his Ph.D. at the Department of Aeronautics and Astronautics, Massachusetts Institute of Technology, Cambridge, MA, USA.
His research interests include cooperative control
and planning for autonomous multi-robot systems, risk-aware motion planning, and traversability analysis for off-road navigation.
\end{IEEEbiography}

\begin{IEEEbiography}[{\includegraphics[width=1in, height=1.25in, clip,keepaspectratio, trim={0cm 1cm 0.5cm 1.0cm}]{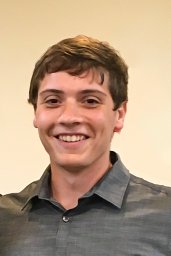}}]{Brent Schlotfeldt}
received the Ph.D. degree in Electrical and Systems Engineering and the Master of Science in Robotics from the University of Pennsylvania, Philadelphia, PA, in 2021 and 2017, respectively, and the Bachelor of Science in Electrical Engineering and Computer Science from the University of Maryland, College Park, MD, USA, in 2016. 
His research interests include planning, control, and state estimation for robotic systems, with applications to autonomous driving and active sensing.
\end{IEEEbiography}

\begin{IEEEbiography}[{\includegraphics[width=1in, height=1.25in, clip,keepaspectratio, trim={0.7cm 0cm 0.8cm 0.3cm}]{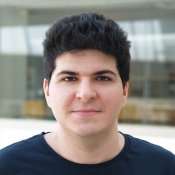}}]{Kasra Khosoussi}
received the B.Sc. degree in computer engineering from the Department of Electrical and Computer Engineering at the K. N. Toosi University of Technology, Tehran, Iran, in 2011, and the Ph.D. degree in robotics from the University of Technology Sydney, Australia, in 2017. He is currently a Senior Research Scientist at CSIRO and a Visiting Fellow at the Australian National University. Previously he was a Research Scientist (2019-2021) and a Postdoctoral Associate (2017–2018) at MIT’s Department of Aeronautics and Astronautics and Laboratory for Information and Decision Systems. His research is primarily focused on developing robust and reliable algorithms with provable performance guarantees for single- and multi-robot perception and autonomy. He is a finalist for the ICRA 2018 Best Paper Award for Multi-Robot Systems, an honorable mention for the 2021 IEEE Transactions on Robotics King-Sun Fu Memorial Best Paper Award, and the winner of Hilti SLAM Challenge at ICRA 2022.
\end{IEEEbiography}

\begin{IEEEbiography}[{\includegraphics[width=1in, height=1.25in, clip,keepaspectratio, trim={10cm 40cm 14.5cm 0cm}]{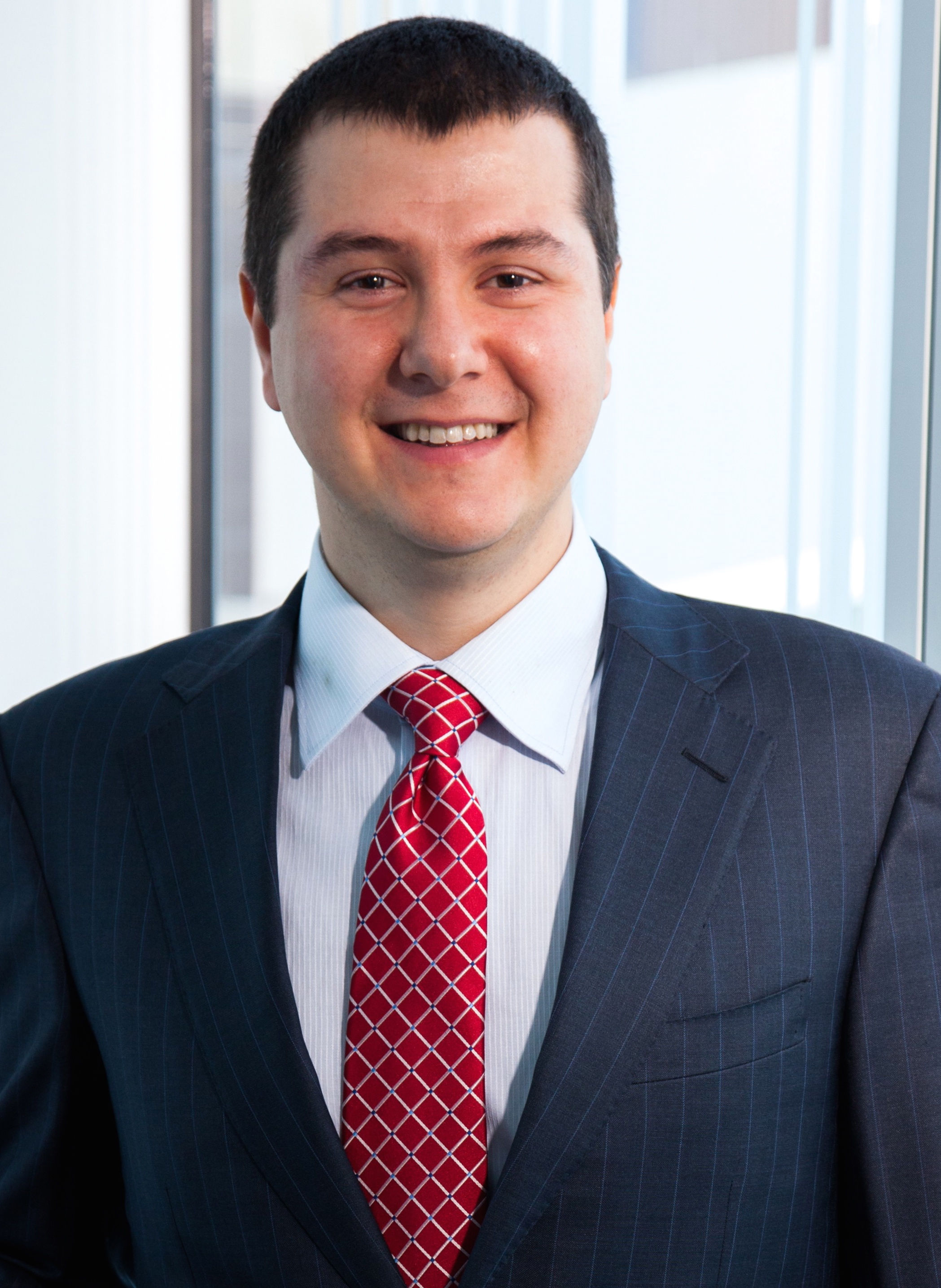}}]{Nikolay Atanasov}
(S’07-M’16) is an Assistant Professor of Electrical and Computer Engineering at the University of California San Diego, La Jolla, CA, USA. He obtained a B.S. degree in Electrical Engineering from Trinity College, Hartford, CT, USA in 2008, and M.S. and Ph.D. degrees in Electrical and Systems Engineering from University of Pennsylvania, Philadelphia, PA, USA in 2012 and 2015, respectively. His research focuses on robotics, control theory, and machine learning with applications to active perception problems for autonomous mobile robots. He works on probabilistic models that unify geometric and semantic information in simultaneous localization and mapping (SLAM) and on optimal control and reinforcement learning algorithms for minimizing uncertainty in probabilistic models. Dr. Atanasov's work has been recognized by the Joseph and Rosaline Wolf award for the best Ph.D. dissertation in Electrical and Systems Engineering at the University of Pennsylvania in 2015, the best conference paper award at the IEEE International Conference on Robotics and Automation (ICRA) in 2017, and the NSF CAREER award in 2021.
\end{IEEEbiography}

\begin{IEEEbiography}[{\includegraphics[width=1in, height=1.25in, clip,keepaspectratio]{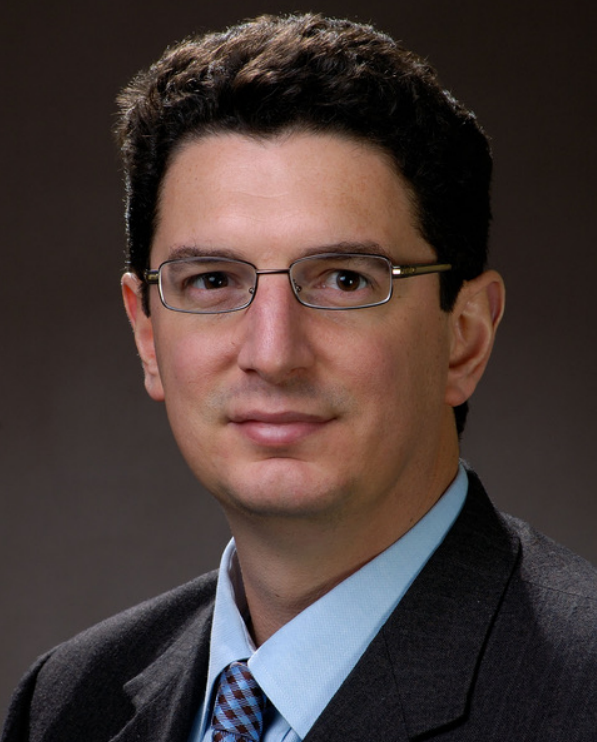}}]{George J. Pappas}
(S’90-M’91-SM’04-F’09) received the Ph.D. degree in electrical engineering and computer sciences from the University of California, Berkeley, CA, USA, in 1998. He is currently the UPS Foundation Professor and Chair of the Department of Electrical and Systems Engineering, University of Pennsylvania, Philadelphia, PA, USA. He also holds a secondary appointment with the Department of Computer and Information Sciences and the Department of Mechanical Engineering and Applied Mechanics. He is a Member of the GRASP Lab and the PRECISE Center. He had previously served as the Deputy Dean for Research with the School of Engineering and Applied Science. His research interests include control systems, robotics, and machine learning. Dr. Pappas has received various awards, such as the Antonio Ruberti Young Researcher Prize, the George S. Axelby Award, the Hugo Schuck Best Paper Award, the George H. Heilmeier Award, the National Science Foundation PECASE award and numerous best student papers awards.
\end{IEEEbiography}

\begin{IEEEbiography}[{\includegraphics[width=1in, height=1.25in, clip,keepaspectratio, trim={2.5cm 2cm 3cm 0cm}]{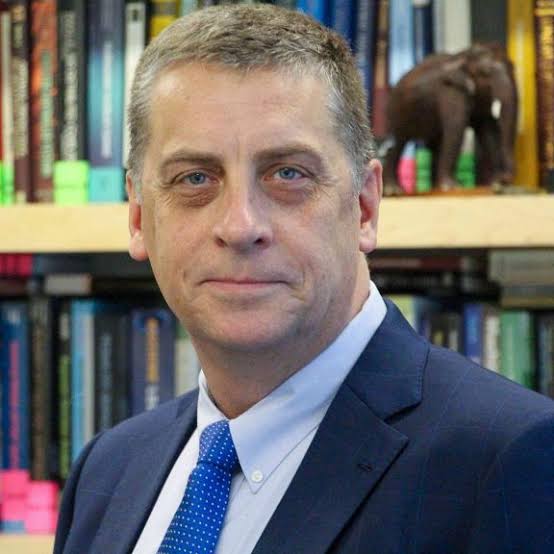}}]{Jonathan P. How}
received the B.A.Sc. degree in aerospace option from the University of Toronto, Toronto, ON, Canada, in 1987, and the S.M. and Ph.D. degrees in aeronautics and astronautics from Massachusetts Institute of Technology, Cambridge, MA, USA, in 1990 and 1993, respectively.
He is the Richard C. Maclaurin Professor of aeronautics and astronautics with the Massachusetts Institute of Technology. Prior to joining MIT in 2000, he was an Assistant Professor with the Department of Aeronautics and Astronautics, Stanford University. His research focuses on robust planning and learning under uncertainty with an emphasis on multiagent systems.
Dr. How was the Editor-in-Chief of the IEEE Control Systems Magazine (2015–2019) and was elected to the Board of Governors of the IEEE Control System Society (CSS) in 2019. His work has been recognized with multiple awards, including the 2020 IEEE CSS Distinguished Member Award, the 2020 AIAA Intelligent Systems Award, the 2015 AeroLion Technologies Outstanding Paper Award for Unmanned Systems, the 2015 IEEE CSS Video Clip Contest, the 2011 IFAC Automatica award for best applications paper, and the 2002 Institute of Navigation Burka Award. He also received the Air Force Commander’s Public Service Award in 2017. He is a Fellow of AIAA and was elected to the National Academy of Engineering in 2021.
\end{IEEEbiography}

\vfill

\end{document}